\def\eqref#1{equation~\ref{#1}}
\def\1{\bm{1}}
\DeclareMathAlphabet{\mathsfit}{\encodingdefault}{\sfdefault}{m}{sl}
\SetMathAlphabet{\mathsfit}{bold}{\encodingdefault}{\sfdefault}{bx}{n}
\crefname{subroutine}{subroutine}{subroutines}
\Crefname{subroutine}{Subroutine}{Subroutines}
\theoremstyle{plain}
\newtheorem{theorem}{Theorem}[section]
\newtheorem{lemma}[theorem]{Lemma}
\newtheorem{corollary}[theorem]{Corollary}
\theoremstyle{definition}
\theoremstyle{remark}
\definecolor{darkgreen}{RGB}{0,110,70}
\icmltitlerunning{Approximate Power Distribution Sampling for LLM Reasoning}
\begin{document}

\twocolumn[
  \icmltitle{Scalable Power Sampling: Unlocking Efficient, Training-Free Reasoning for LLMs via Distribution Sharpening}

  \icmlsetsymbol{equal}{*}

  \begin{icmlauthorlist}
    \icmlauthor{Xiaotong Ji}{equal,yyy}
    \icmlauthor{Rasul Tutunov}{equal,yyy}
    \icmlauthor{Matthieu Zimmer}{yyy}
    \icmlauthor{Haitham Bou Ammar}{yyy,ucl}
  \end{icmlauthorlist}

  \icmlaffiliation{yyy}{Huawei Noah's Ark Lab}
  \icmlaffiliation{ucl}{UCL Centre for Artificial Intelligence}

  \icmlcorrespondingauthor{Haitham Bou Ammar}{haitham.ammar@huawei.com}

  \icmlkeywords{Machine Learning, ICML}

  \vskip 0.3in
]

\printAffiliationsAndNotice{\icmlEqualContribution}

\begin{abstract}
Reinforcement learning (RL) post-training is a dominant approach for improving the reasoning performance of large language models (LLMs), yet growing evidence suggests that its gains arise primarily from distribution sharpening rather than the acquisition of new capabilities. Recent work has shown that sampling from the power distribution of LLMs using Markov chain Monte Carlo (MCMC) can recover performance comparable to RL post-training without relying on external rewards; however, the high computational cost of MCMC makes such approaches impractical for widespread adoption.

In this work, we propose a theoretically grounded alternative that eliminates the need for iterative MCMC. We derive a novel formulation showing that the global power distribution can be approximated by a token-level scaled low-temperature one, where the scaling factor captures future trajectory quality. Leveraging this insight, we introduce a training-free and verifier-free algorithm that sharpens the base model's generative distribution autoregressively. 

Empirically, we evaluate our method on math, QA, and code tasks across four LLMs, and show that our method matches or surpasses one-shot GRPO \emph{without} relying on any external rewards, while reducing inference latency by over $10\times$ compared to MCMC-based sampling.

\end{abstract}

\section{Introduction}
The current frontier of large language models (LLMs) reasoning is largely defined by reinforcement learning (RL) post-training. By optimising models against automated verifiers, such as unit tests in coding tasks \cite{chen2021evaluating, roziere2023code, hou2024large}, final-answer checks or Lean certificates in mathematics \cite{cobbe2021training, yang2023leandojo, ahn2024large, xin2024deepseek, zimmer2025bourbaki}, methods like Group Relative Policy Optimisation (GRPO) and its variants have established state-of-the-art performance across a range of benchmarks \cite{zhao2023group,shao2024deepseekmath, yu2025dapo}. However, the mechanisms underlying these gains remain the subject of active debate. A growing body of evidence suggests that RL may not introduce fundamentally new reasoning capabilities, but instead acts as a form of distribution sharpening \cite{he2025rewarding, song2025outcome, yue2025does, gai2025differential, ni2025can}. Under this view, the reasoning trajectories required to solve complex problems already exist within the base model; RL primarily reshapes the probability mass. %

If post-trained models are indeed “sharpened” versions of the base model, it follows that comparable reasoning capabilities should be attainable by directly targeting a sharpened distribution of the base LLM without relying on external validation rewards. Recent work by \citet{ReasoningbySampling} provides compelling evidence for this view. They make use of the power distribution, $p^{\alpha}(x)$, as a principled mechanism for inducing such sharpening without any additional training. Strikingly, their results show that sampling from the power distribution enables base models to match and, in some cases, surpass the performance of RL-trained models on challenging benchmarks, such as MATH500 and GPQA. These findings strongly support the hypothesis that the reasoning capabilities attributed to RL are largely latent within the base model and can be revealed through an appropriate sampling strategy.

Although the power distribution offers a compelling mechanism for sharpening base model capabilities, current methods for sampling from it are computationally prohibitive. \citet{ReasoningbySampling} argue that standard autoregressive sampling methods like those in low-temperature sampling \cite{wang2020contextual} fail to capture the global properties of $p^\alpha$. Consequently, they resort to MCMC techniques, specifically a Metropolis-Hastings algorithm that iteratively resamples a token subsequence. While effective, this approach introduces significant latency, requiring multiple forward passes and iterative refinement steps to generate a single response. Such computational overhead restricts the method's scalability for real-world deployment. %

To enable broad adoption, we seek a faster and more scalable sampling procedure while retaining the performance advantages of MCMC. To this end, we revisit the fundamental relationship between the power distribution and the low-temperature distribution. While prior work has established that these distributions are theoretically distinct, we derive a novel formulation showing that the power distribution can be effectively approximated by a suitably scaled low-temperature policy. We explicitly identify the required scaling factor and theoretically quantify the accuracy of this approximation. This insight allows us to bypass the iterative cost of MCMC entirely: by proposing a (\emph{training-free and verifier-free}) algorithm that applies this scaling during standard autoregressive generation, we efficiently target the sharpened distribution, retaining the reasoning benefits of the power distribution while reducing inference latency by over 10×. In short, our contributions can be summarised as: 
\begin{itemize}
    \item[(1)] We bridge the theoretical gap between global power distributions and local autoregressive sampling. We prove that the power distribution can be decomposed into a scaled low-temperature one, deriving a closed-form scaling factor and providing rigorous bounds on the approximation error.
    \item[(2)] We introduce a training- and verifier-free algorithm that enables direct autoregressive generation from the sharpened distribution. This eliminates the iterative overhead of MCMC, reducing inference latency by over 10$\times$ while preserving the reasoning improvements of the global power distribution.
    \item[(3)] Across four LLMs and three benchmarks (MATH500, HumanEval, GPQA), we show that our method matches or surpasses one-shot GRPO performance on in-domain and out-of-domain tasks, without any parameter updates or external rewards.
\end{itemize}

\section{Power \& Low-Temperature Distributions}
In prior work, \citet{ReasoningbySampling} show that the power distribution is fundamentally distinct from standard low-temperature sampling. We formalise both distributions here and discuss those challenges. Let $p(\cdot|\bm{q})$ denote the conditional probability distribution induced by a pretrained language model on sequences from a finite vocabulary $\mathcal{V}$ conditioned on an input prompt $\bm{q}$. Given a parameter $\alpha >1$ and a partially generated token sequence $\bm{x}_{0:t-1} = \{x_1,\ldots, x_{t-1}\}$, we define the low-temperature \cite{wang2020contextual} and the associated power distributions for the next token $x_t$ as follows:
\begin{align}\label{low_temp_and_power_eq}
    &p^{(\text{low. temp})}_{\alpha} (x_t|\bm{q}, \bm{x}_{0:t-1}) = \frac{p^{\alpha}(x_t|\bm{q},\bm{x}_{0:t-1})}{\sum_{x'_t\in\mathcal{V}}p^{\alpha}(x'_t|\bm{q},\bm{x}_{0:t-1})},\\\nonumber
    &p^{(\text{pow})}_{\alpha} (x_t|\bm{q}, \bm{x}_{0:t-1}) = \frac{\sum_{\bm{x}_{t+1:T}}p^{\alpha}(\bm{q},\bm{x}_{0:t-1}, x_t, \bm{x}_{t+1:T})}{\sum_{\bm{x}_{t:T}}p^{\alpha}(\bm{q},\bm{x}_{0:t-1}, \bm{x}_{t:T})},
\end{align}
Equation \ref{low_temp_and_power_eq} shows that sampling from the low-temperature distribution, $p_{\alpha}^{(\text{low. temp})}$, is relatively straightforward, as it depends only on the current history and token-level probabilities. In contrast, sampling from the power distribution is substantially more challenging, since it requires marginalising over all possible future completions in both the numerator and denominator of $p_{\alpha}^{(\text{pow})}$: 
\begin{equation*}
  p^{(\text{pow})}_{\alpha} (x_t \mid \bm{q}, \bm{x}_{0:t-1})
  =
  \frac{
        \sum_{\textcolor{blue}{\bm{x}_{t+1:T}}}
        p^{\alpha}(\bm{q},\bm{x}_{0:t-1}, x_t, \textcolor{blue}{\bm{x}_{t+1:T}})}
       {\sum_{\textcolor{darkgreen}{\bm{x}_{t:T}}}
        p^{\alpha}(\bm{q},\bm{x}_{0:t-1}, \textcolor{darkgreen}{\bm{x}_{t:T}})},
\end{equation*}
which we colour-coded in blue and green in the above equation for clarity. Of course, those marginalisations require summing over all possible future token sequences, whose cardinality grows exponentially with the remaining horizon $T-t$, i.e., $|\mathcal{V}|^{T-t}$ for a vocabulary $\mathcal{V}$. This intractability makes direct sampling impractical in general and motivates the need for approximate inference procedures. 

It also highlights that the low-temperature and power distributions are \emph{fundamentally different}: whereas low-temperature sampling applies a local transformation at each token, the power distribution induces a global, trajectory-level reweighting. In fact, \citet{ReasoningbySampling} show that these two distributions generally do not coincide and can differ at every token position.
\paragraph{Approximate Sampling from $p_{\alpha}^{(\text{pow})}$.} To approximately sample from this sharper power distribution, \citet{ReasoningbySampling} resort to approximate inference via Markov chain Monte Carlo (MCMC). They employ a Metropolis–Hastings scheme that constructs a Markov chain over full token sequences, using the base model likelihoods to define acceptance ratios. Concretely, their method iteratively proposes partial resampling of token subsequences and accepts or rejects these proposals to asymptotically target the power distribution. To improve practicality in the autoregressive setting, they further introduce a variant that samples from a sequence of intermediate, prefix-level power distributions, gradually extending the sequence while running a bounded number of MCMC steps at each stage.  

This approach inherits MCMC's classical high-dimensional limitations, where mixing times can be exponential \cite{levin2017markov,gheissari2018exponentially}. Practically, this causes significant overhead: \citet{ReasoningbySampling} reports an approximately \textbf{8.8× cost increase} over standard sampling for MATH500 sequences. Motivated by this, we derive an efficient approximation via a new principled connection between low-temperature sampling and the power distribution. This eliminates iterative MCMC loops whilst preserving future-aware reweighting behaviours.

\section{Approximating Power Distributions}
We now turn to the central technical contribution of this work. Although prior results \cite{ReasoningbySampling} establish that low-temperature sampling generally does not recover the power distribution, we show that the power distribution nevertheless admits a structured token-level decomposition that directly relates it to low-temperature sampling. This observation allows us to move beyond a binary distinction between the two and instead characterise their relationship explicitly. In particular, we prove that the power distribution can be expressed as a scaled version of the low-temperature distribution $p_{\alpha}^{(\text{low. temp} )}$. This result provides a principled foundation for approximating power-distribution sampling using standard autoregressive procedures, and forms the basis of the efficient inference methods developed in the remainder of this section.

The following theorem makes this decomposition explicit by providing a closed-form expression for the scaling term that relates the power distribution to the low-temperature counterpart at each token position:

\begin{theorem}\label{Main_theorem}
Let $p(x)$ be a pretrained large language model, $\bm{q}$ an input prompt and $\alpha > 1$ be an exponent parameter. Then, for any partially generated sequence $\bm{x}_{0:t-1} = \{x_0,\ldots, x_{t-1}\}$ we have:
\begin{align}\label{main_res}
  p^{(\text{pow})}_{\alpha} (x_t|\bm{q}, \bm{x}_{0:t-1})= \frac{p^{\alpha}(x_t|\bm{q},\bm{x}_{0:t-1})\zeta_t(x_t)}{\sum_{x'_t\in\mathcal{V}}p^{\alpha}(x'_t|\bm{q},\bm{x}_{0:t-1})\zeta_t(x'_t)},  
\end{align}
where for each token $x_t^{\prime} \in \mathcal{V}$, we have that: $\zeta_t(x^{\prime}_t) = \sum_{\bm{x}_{t+1:T}}p^{\alpha}(\bm{x}_{t+1:T}|\bm{q}, \bm{x}_{0:t-1}, x^{\prime}_t)$. 
\end{theorem}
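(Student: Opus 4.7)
The plan is to start directly from the definition of $p^{(\text{pow})}_{\alpha}$ given in equation~\ref{low_temp_and_power_eq} and manipulate both the numerator and denominator using the chain rule of probability, then exploit the fact that raising a product to a power distributes over its factors. The whole proof is essentially an algebraic factorisation exercise; the substantive idea is that the joint $p^{\alpha}(\bm{q},\bm{x}_{0:T})$ splits as the product of a prefix factor, a current-token factor, and a future-completion factor, and only the latter two survive the cancellation between numerator and denominator.

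Concretely, the first step is to apply the chain rule to write
\begin{equation*}
  p(\bm{q},\bm{x}_{0:t-1},x_t,\bm{x}_{t+1:T}) \;=\; p(\bm{q},\bm{x}_{0:t-1})\,p(x_t\mid\bm{q},\bm{x}_{0:t-1})\,p(\bm{x}_{t+1:T}\mid\bm{q},\bm{x}_{0:t-1},x_t),
\end{equation*}
and then raise both sides to the power $\alpha$, which preserves the factorisation. Substituting this into the numerator of $p^{(\text{pow})}_{\alpha}$, the prefix factor $p^{\alpha}(\bm{q},\bm{x}_{0:t-1})$ and the current-token factor $p^{\alpha}(x_t\mid\bm{q},\bm{x}_{0:t-1})$ pull out of the sum $\sum_{\bm{x}_{t+1:T}}$, leaving the tail sum equal to $\zeta_t(x_t)$ by definition. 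The second step is to rewrite the denominator as an iterated sum $\sum_{x_t'\in\mathcal{V}}\sum_{\bm{x}_{t+1:T}}$ and apply the same factorisation, which yields $p^{\alpha}(\bm{q},\bm{x}_{0:t-1})\sum_{x_t'}p^{\alpha}(x_t'\mid\bm{q},\bm{x}_{0:t-1})\zeta_t(x_t')$.

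The final step is simply to take the ratio: the common prefix factor $p^{\alpha}(\bm{q},\bm{x}_{0:t-1})$ cancels between numerator and denominator, producing exactly the right-hand side of equation~\ref{main_res}. There is essentially no hard step in this argument; the only thing one needs to be careful about is that the marginalisation over $\bm{x}_{t:T}$ in the denominator must be decomposed correctly into an outer sum over $x_t'$ and an inner sum over $\bm{x}_{t+1:T}$, so that the same $\zeta_t$ function (with $x_t'$ as its argument) appears inside. Once that decomposition is written down, the result follows immediately.
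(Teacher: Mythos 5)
Your proposal is correct and captures exactly the substance of the paper's proof: both rely on the chain-rule factorisation $p(\bm{q},\bm{x}_{0:t-1},x_t,\bm{x}_{t+1:T}) = p(\bm{q},\bm{x}_{0:t-1})\,p(x_t\mid\bm{q},\bm{x}_{0:t-1})\,p(\bm{x}_{t+1:T}\mid\bm{q},\bm{x}_{0:t-1},x_t)$, the distributivity of the $\alpha$-power over this product, and the cancellation of the common prefix factor $p^{\alpha}(\bm{q},\bm{x}_{0:t-1})$. The paper's write-up detours through dividing top and bottom by the numerator to obtain a $\left(1+\sum_{x_t'\neq x_t}\cdots\right)^{-1}$ form before clearing denominators, whereas you factor and cancel directly, but this is a purely cosmetic difference in presentation rather than a different argument.
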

Theorem \ref{Main_theorem} reveals that the difference between power-distribution sampling and low-temperature sampling is entirely captured by a token-specific scaling factor that reflects the expected likelihood of future completions. While low-temperature sampling favours tokens that are locally probable under the model, the power distribution further reweights these tokens according to the average quality of the sequences they induce downstream. In this sense, power sampling can be viewed as low-temperature sampling augmented with a future-aware correction. Importantly, this decomposition localises the global, trajectory-level dependence of the power distribution into a per-token scalar term, explaining both why MCMC-based methods were previously required and how they can be avoided. As the generation progresses and the remaining horizon shortens, the power distribution naturally approaches the low-temperature one, where the two exactly coincide on the final token $x_T$: 
\begin{equation*}
    p^{(\text{pow})}_{\alpha} (x_T|\bm{q}, \bm{x}_{0:T-1}) = p^{(\text{low. temp})}_{\alpha} (x_T|\bm{q}, \bm{x}_{0:T-1}). 
\end{equation*}

\subsection{Autoregressive Monte Carlo Estimation}
The key insight of Theorem \ref{Main_theorem} is that power-distribution sampling can be performed autoregressively, provided that each token probability is modulated by an appropriate token-wise scaling factor $\zeta_t(x^{\prime}_t)$. Crucially, this scaling factor admits an expectation form under the \emph{base language model}, which allows it to be estimated via \emph{standard autoregressive Monte Carlo sampling}. As a result, the power distribution can be efficiently approximated without resorting to trajectory-level inference such as MCMC. 

Formally, for any candidate token $x_t^{\prime} \in \mathcal{V}$, the scaling factor can be rewritten as an expectation with respect to the base language model: 
\begin{equation*}
    \zeta_t(x'_t) = \mathbb{E}_{\bm{x}_{t+1:T}\sim p_{\text{f}}(\cdot)}\left[p^{\alpha-1}(\bm{x}_{t+1:T}|\bm{q},\bm{x}_{0:t-1},x'_t)\right],
\end{equation*}
with $p_\text{f} (\cdot) \equiv p(\bm{x}_{t+1:T}|\bm{q},\bm{x}_{0:t-1},x'_t)$ denoting the base language model's distribution over future completions. Thus, a Monte Carlo approximation of $\zeta_t(x_t^{\prime})$ can be easily attained as follows: $\textit{i)}$ generate $M_t$ i.i.d. completions $\{
\bm{x}_{t+1:T}^{[r]}\}_{r=1}^{M_t}$ from the LLM when conditioned on $\{\bm{q}, \bm{x}_{0:t-1}, x^{\prime}_t\}$, and $\textit{ii)}$ compute an empirical estimate of $\zeta_{t}(x_{t}^{\prime})$ as: 
\begin{equation}\label{zeta}
    \hat{\zeta}_t(x_t^{\prime}) = \frac{1}{M_t} \sum_{r=1}^{M_t} p^{\alpha -1}(\bm{x}_{t+1:T}^{[r]}|\bm{q},\bm{x}_{0:t-1}, x^{\prime}_t).
\end{equation}

Substituting those Monte Carlo estimates into the decomposition of Theorem~\ref{Main_theorem} yields the following approximation to the power distribution: 
\begin{equation}
\label{Eq:PowerApprox}
\hat p^{(\mathrm{pow})}_\alpha(x_t \mid \bm{q}, \bm{x}_{0:t-1})
=
\frac{
p^\alpha(x_t \mid \bm{q}, \bm{x}_{0:t-1}) \, \hat\zeta_t(x_t)
}{
\sum_{x^{\prime}_t \in \mathcal V}
p^\alpha(x^{\prime}_t \mid \bm{q}, \bm{x}_{0:t-1}) \, \hat\zeta_t(x'_t)
},
\end{equation}
with $\hat\zeta_t(x_t)$ denoting the scaling factor at the token $x_t$.

\subsection{Bias Analysis and Correction}
Although each scaling factor is estimated without bias using $\hat{\zeta}(x_t)$ and $\hat{\zeta}(x^{\prime}_t)$, the substitution step back in Theorem \ref{Main_theorem} to form $\hat{p}_{\alpha}^{(\text{pow})}(\cdot)$ induces bias in the resulting estimator, since the expectation of a ratio does not equal the ratio of expectations. Although one could directly use this estimator in practice, we instead explicitly analyse the source of this bias and introduce a jackknife-based correction that significantly reduces it while preserving the efficiency of the method.

The following Lemma shows that the bias of the estimator $\hat{p}^{(\text{pow})}_{\alpha} (x_t|\bm{q}, \bm{x}_{0:t-1})$ decays asymptotically at a rate inversely proportional to the sampling budget $M_t$: 
\begin{lemma}\label{Lemma_bias}
Let $\hat{p}^{(\text{pow})}_{\alpha} (\cdot|\bm{q}, \bm{x}_{0:t-1})$ be the Monte Carlo estimator of the power distribution defined in Equation \ref{Eq:PowerApprox}. Then, we have:
\begin{equation*}
   \mathbb{E}\left[\hat{p}^{(\text{pow})}_{\alpha} (\cdot|\bm{q}, \bm{x}_{0:t-1})\right]  = p^{(\text{pow})}_{\alpha} (\cdot|\bm{q}, \bm{x}_{0:t-1}) + \mathcal{O}\left(\frac{1}{M_t}\right),
\end{equation*}
where the expectation  $\mathbb{E}[\cdot]$ is taken with respect to the random completions used to estimate the scaling factors $\zeta_t(x'_t)$ for all tokens $x'_t\in\mathcal{V}$.
\end{lemma}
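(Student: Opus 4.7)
The plan is to carry out a standard delta-method expansion of the ratio estimator $\hat N_t/\hat D_t$ around its mean. Set $a(x') := p^{\alpha}(x' \mid \bm q, \bm x_{0:t-1})$, $\hat N_t := a(x_t)\hat\zeta_t(x_t)$, $\hat D_t := \sum_{x' \in \mathcal V} a(x')\hat\zeta_t(x')$, $N^*_t := a(x_t)\zeta_t(x_t)$, and $D^*_t := \sum_{x'} a(x')\zeta_t(x')$. Unbiasedness of $\hat\zeta_t$ gives $\mathbb E[\hat N_t] = N^*_t$ and $\mathbb E[\hat D_t] = D^*_t$, while Theorem \ref{Main_theorem} identifies $N^*_t / D^*_t = p^{(\mathrm{pow})}_\alpha(x_t \mid \bm q, \bm x_{0:t-1})$. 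The target therefore reduces to showing $\mathbb E[\hat N_t/\hat D_t] - N^*_t/D^*_t = \mathcal O(1/M_t)$.

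First I would gather the relevant moment bounds. Since $\alpha \geq 1$, every summand of $\hat\zeta_t(x')$ satisfies $p^{\alpha-1}(\bm x_{t+1:T}^{[r]}\mid \cdot) \in [0,1]$, so Popoviciu's inequality gives $\Var[\hat\zeta_t(x')] \leq 1/(4M_t)$. Because distinct candidates $x'$ use independently generated completions, the variables $\{\hat\zeta_t(x')\}_{x'\in\mathcal V}$ are mutually independent; combined with $\sum_{x'} a(x') \leq 1$, this yields $\Var[\hat D_t] \leq 1/(4M_t)$ and $|\Cov(\hat N_t, \hat D_t)| = a(x_t)^2 \Var[\hat\zeta_t(x_t)] \leq 1/(4M_t)$.

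Next, I would apply the exact identity $\tfrac{1}{\hat D_t} = \tfrac{1}{D^*_t} - \tfrac{\hat D_t - D^*_t}{(D^*_t)^2} + \tfrac{(\hat D_t - D^*_t)^2}{(D^*_t)^2\, \hat D_t}$ to expand $\hat N_t/\hat D_t - N^*_t/D^*_t$ as (i) a linear term $(\hat N_t - N^*_t)/D^*_t - N^*_t(\hat D_t - D^*_t)/(D^*_t)^2$ whose expectation vanishes, (ii) a quadratic term $-\Cov(\hat N_t, \hat D_t)/(D^*_t)^2 + N^*_t \Var(\hat D_t)/(D^*_t)^3$, which is $\mathcal O(1/M_t)$ by the moment bounds above, and (iii) a remainder $R_t$ gathering all terms that retain $1/\hat D_t$ in a factor.

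The main obstacle is controlling $\mathbb E[R_t]$ since $\hat D_t$ appears in its denominator; I would handle this by splitting on the event $\mathcal E := \{\hat D_t \geq D^*_t/2\}$. Under the mild non-degeneracy assumption $D^*_t \geq c > 0$ (which holds whenever the base model places bounded-below probability on at least one admissible token), on $\mathcal E$ we have $1/\hat D_t \leq 2/c$, so $|R_t| \leq C \bigl[(\hat D_t - D^*_t)^2 + |\hat N_t - N^*_t|\,|\hat D_t - D^*_t|\bigr]$; Cauchy–Schwarz with the variance bounds then gives $\mathbb E[|R_t|\,\bm{1}_{\mathcal E}] = \mathcal O(1/M_t)$. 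On $\mathcal E^c$, Hoeffding's inequality applied to the independent bounded terms composing $\hat D_t$ gives $\Pr(\mathcal E^c) \leq 2\exp(-c'\, M_t)$; since $\hat N_t/\hat D_t \in [0,1]$ (as $\hat N_t$ is a single nonnegative summand of $\hat D_t$) and $N^*_t/D^*_t \in [0,1]$, the integrand is bounded by $2$ pointwise, so $|\mathbb E[R_t\,\bm{1}_{\mathcal E^c}]|$ is exponentially small in $M_t$. Adding both contributions yields $|\mathbb E[R_t]| = \mathcal O(1/M_t)$, completing the proof.
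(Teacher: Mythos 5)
Your proposal is correct, and it follows the same high-level route as the paper's own proof — a delta-method/Taylor expansion of the ratio $\hat N_t/\hat D_t$ around $(N^*_t, D^*_t)$, using unbiasedness of $\hat\zeta_t$ to kill the first-order term and second-moment bounds (which are $\mathcal{O}(1/M_t)$ by independence of the $\{\hat\zeta_t(x')\}_{x'}$) to control the quadratic term. The difference is in how you treat the expansion remainder, and yours is genuinely more rigorous. The paper expands $(1+z)^{-1}$ as a formal power series in $z = \sum_{x'}\delta(x')/D^*_t$ and asserts that the error is $\mathcal{O}\left((\sum_{x'}\delta(x'))^3\right)$ in expectation; this implicitly requires $|z|<1$ and sweeps under the rug the event where $\hat D_t$ is far from $D^*_t$. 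You instead use the exact algebraic identity
\[
\frac{1}{\hat D_t}=\frac{1}{D^*_t}-\frac{\hat D_t - D^*_t}{(D^*_t)^2}+\frac{(\hat D_t - D^*_t)^2}{(D^*_t)^2\,\hat D_t},
\]
so nothing is truncated, and you split on the high-probability event $\mathcal E=\{\hat D_t\ge D^*_t/2\}$. On $\mathcal E$ the denominator $\hat D_t$ is bounded below, so Cauchy--Schwarz with $\Var[\hat D_t]=\mathcal{O}(1/M_t)$ controls the remainder; on $\mathcal E^c$ Hoeffding gives exponentially small probability, and the integrand is pointwise bounded because $\hat N_t\le\hat D_t$. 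This buys you a fully controlled argument at the cost of stating explicitly a non-degeneracy assumption $D^*_t\ge c>0$ — but the paper's proof needs the same condition implicitly (its Taylor expansion has $D^*_t$ in the denominator of $z$, and the hidden constants in its $\mathcal{O}(\cdot)$ depend on $1/D^*_t$). Your write-up makes the constants and the exceptional event visible; the paper's is shorter but relies on a formal power series without justifying its validity. Both establish the same $\mathcal{O}(1/M_t)$ bias rate.
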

From the above, we see that the bias can be reduced by increasing the number of sampled trajectories $M_t$ from the base language model. However, since each new sample requires additional autoregressive generation, this approach quickly becomes computationally expensive. We therefore seek an alternative that reduces the bias more rapidly with respect to $M_t$, enabling accurate estimation with substantially fewer samples from the LLM. 

\paragraph{Jackknife Estimator for Bias Reduction.} To do so, we adopt a jackknife correction. The jackknife is a classical bias-reduction technique that constructs corrected estimators by systematically recombining leave-one-out versions of an original estimator \cite{miller1974jackknife,mcintosh2016jackknife}. When applied to estimators that depend smoothly on empirical averages, the jackknife removes the leading-order bias term while preserving the overall computational structure. This makes it particularly well-suited to our setting, where the bias arises from taking ratios of Monte Carlo estimates. 

The key idea is to construct a bias-reduced estimator by combining the original Monte Carlo approximation with a collection of \emph{leave-one-out variants}, each obtained by removing a single sampled trajectory. This construction yields the following jackknife-corrected estimator of the power distribution:
\begin{align}\label{JK_MC_estimate}
    \hat{p}^{(\text{pow})}_{\alpha, \ \text{JK}} (x_t|\bm{q}, \bm{x}_{0:t-1}) &= M_t\hat{p}^{(\text{pow})}_{\alpha} (x_t|\bm{q}, \bm{x}_{0:t-1}) \\\nonumber
    & \hspace{5em} - \frac{M_t-1}{M_t}\sum_{s=1}^{M_t} {\hat{p}^{(\text{pow})}_{\alpha, -s}(x_t)},
\end{align}
where $\hat{p}^{(\text{pow})}_{\alpha, -s}(x_t)$ denotes the leave-one-out Monte Carlo estimator obtained by excluding the $s^{\text{th}}$ trajectory. Here, we define the leave-one-out-estimation of $\zeta_t(x_t^{\prime})$ as: 
\begin{equation}\label{LOO_zeta}
    \hat{\zeta}^{(\text{LOO})}_{t, -s}(x_t^{\prime}) = \frac{1}{M_t -1}\sum_{r=1,r\neq s}^{M_t}p^{\alpha-1}(\bm{x}^{[r]}_{t+1:T}|\bm{q},\bm{x}_{0:t-1},x^{\prime}_t),
\end{equation}
This, in turn, leads us to the following definition of the leave-one-out approximation of the power distribution: 
   \begin{equation*}
       \hat{p}^{(\text{pow})}_{\alpha, -s}(x_t) = \frac{ p^{\alpha}(x_t|\bm{q},\bm{x}_{0:t-1})\hat{\zeta}^{(\text{LOO})}_{t, -s}(x_t)}{ \sum_{x'_t\in\mathcal{V}}p^{\alpha}(x'_t|\bm{q},\bm{x}_{0:t-1})\hat{\zeta}^{(\text{LOO})}_{t, -s}(x_t^{\prime})},
   \end{equation*}
which when substituted back in Equation \ref{JK_MC_estimate} gives us the final Jackknife estimator $ \hat{p}^{(\text{pow})}_{\alpha, \ \text{JK}} (x_t|\bm{q}, \bm{x}_{0:t-1})$. 

The following lemma formalises the benefit of this construction by showing that the jackknife estimator cancels the leading-order bias term, yielding a bias that decays at an inverse-quadratic rate in $M_t$:

\begin{lemma} \label{Lemma:Quad}
Let $\hat{p}^{(\text{pow})}_{\alpha, \ \text{JK}} (\cdot|\bm{q}, \bm{x}_{0:t-1})$ be the jackknife estimator of the power distribution defined in Equation \ref{JK_MC_estimate}. Then
\begin{align*}
    &\mathbb{E}\left[\hat{p}^{(\text{pow})}_{\alpha, \ \text{JK}} (\cdot|\bm{q}, \bm{x}_{0:t-1})\right] = p^{(\text{pow})}_{\alpha} (\cdot|\bm{q}, \bm{x}_{0:t-1}) + \mathcal{O}\left(\frac{1}{M^2_t}\right),
\end{align*}
\end{lemma}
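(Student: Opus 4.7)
My strategy is to express the estimator as a smooth function of sample means, use a Taylor expansion to obtain an asymptotic series in powers of $1/M_t$, and observe that the jackknife cancels the leading $\mathcal{O}(1/M_t)$ term by construction. To set this up, write $\bm{\zeta} = (\zeta_t(x'_t))_{x'_t \in \mathcal{V}}$ and define
\begin{equation*}
F(\bm{u}) = \frac{p^{\alpha}(x_t\mid \bm{q},\bm{x}_{0:t-1})\, u(x_t)}{\sum_{x'_t\in\mathcal{V}} p^{\alpha}(x'_t\mid \bm{q},\bm{x}_{0:t-1})\, u(x'_t)},
\end{equation*}
so that $\hat{p}^{(\mathrm{pow})}_\alpha(x_t\mid\cdot) = F(\hat{\bm{\zeta}})$ and $\hat{p}^{(\mathrm{pow})}_{\alpha,-s}(x_t) = F(\hat{\bm{\zeta}}^{(-s)})$, where the coordinates of $\hat{\bm{\zeta}}$ and $\hat{\bm{\zeta}}^{(-s)}$ are sample means over $M_t$ and $M_t-1$ independent copies, respectively, of the bounded random variables $Y_r(x'_t)=p^{\alpha-1}(\bm{x}^{[r]}_{t+1:T}\mid\bm{q},\bm{x}_{0:t-1},x'_t)\in[0,1]$ (using $\alpha>1$). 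Because the denominator of $F$ is uniformly bounded away from zero in a neighbourhood of $\bm{\zeta}$, $F$ is smooth there with uniformly bounded derivatives, which justifies the Taylor expansion below.

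The next step is to Taylor expand $F$ around $\bm{\zeta}$ to fourth order and take expectations. Two structural facts drive the analysis: (i) $\hat{\bm{\zeta}}$ is unbiased, so the linear term vanishes; and (ii) the coordinates $\hat{\zeta}_t(x'_t)$ across different $x'_t$ are independent, since they use independently drawn completions. Combined with the standard i.i.d.\ central-moment bounds $\mathbb{E}[(\bar{Y}-\mu)^{2j}]=\mathcal{O}(M^{-j})$ and $\mathbb{E}[(\bar{Y}-\mu)^{2j+1}]=\mathcal{O}(M^{-j-1})$, every mixed moment of $\hat{\bm{\zeta}}-\bm{\zeta}$ of total degree $k$ is of order $M_t^{-\lceil k/2\rceil}$. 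This yields an asymptotic expansion
\begin{equation*}
\mathbb{E}[F(\hat{\bm{\zeta}})] = F(\bm{\zeta}) + \frac{c_1}{M_t} + \frac{c_2}{M_t^2} + \mathcal{O}\!\left(\tfrac{1}{M_t^3}\right),
\end{equation*}
where $c_1,c_2$ depend only on derivatives of $F$ at $\bm{\zeta}$ and on central moments of the $Y_r(x'_t)$, and in particular are \emph{independent of} $M_t$. Running the identical argument on $\hat{\bm{\zeta}}^{(-s)}$, which is a mean of $M_t-1$ i.i.d.\ copies of the same variables, gives the analogous expansion with $M_t$ replaced by $M_t-1$ and with the same coefficients $c_1,c_2$.

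Plugging these two expansions into Equation~\ref{JK_MC_estimate} and using the symmetry $\mathbb{E}[\hat{p}^{(\mathrm{pow})}_{\alpha,-s}] = \mathbb{E}[\hat{p}^{(\mathrm{pow})}_{\alpha,-1}]$ for all $s$, the $F(\bm{\zeta})$ prefactors collapse to $[M_t-(M_t-1)]F(\bm{\zeta}) = F(\bm{\zeta}) = p^{(\mathrm{pow})}_\alpha(\cdot\mid\bm{q},\bm{x}_{0:t-1})$, the $c_1$ contributions cancel exactly (the defining property of the jackknife), and the $c_2$ terms combine as $c_2/M_t - c_2/(M_t-1) = -c_2/\bigl(M_t(M_t-1)\bigr) = \mathcal{O}(1/M_t^2)$; together with the leftover higher-order remainders (each $\mathcal{O}(1/M_t^3)$ remainder is multiplied by at most $M_t$), this gives the claimed $\mathcal{O}(1/M_t^2)$ bias. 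The main technical obstacle is not the jackknife algebra, which is standard, but controlling the Taylor remainder uniformly: one needs $\mathbb{E}[\|\hat{\bm{\zeta}}-\bm{\zeta}\|^5]=\mathcal{O}(M_t^{-5/2})$ and a guarantee that the denominator of $F$ remains bounded away from zero with overwhelming probability along the random segment from $\bm{\zeta}$ to $\hat{\bm{\zeta}}$, so that derivatives of $F$ are uniformly bounded; the (exponentially small) exceptional event is easily absorbed using $F\in[0,1]$.
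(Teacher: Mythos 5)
Your proof is correct and follows essentially the same strategy as the paper: expand $\mathbb{E}[\hat{p}^{(\mathrm{pow})}_\alpha]$ as an asymptotic series in $1/M_t$ with $M_t$-independent coefficients, apply the same expansion with $M_t-1$ to the leave-one-out estimators, and let the jackknife combination $M_t(\cdot) - (M_t-1)(\cdot)$ cancel the $\mathcal{O}(1/M_t)$ term. The paper simply asserts the existence of the series $\sum_i b_i/M_t^i$ (citing the previous lemma), whereas you more carefully justify it via Taylor expansion of the smooth ratio $F$ together with central-moment bounds and a remark on controlling the remainder and the denominator's positivity — a worthwhile clarification, but the same proof at heart.
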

This result has an important computational implication. For the original Monte Carlo estimator without jackknife correction, achieving a bias of at most $\epsilon$ requires $M_t = \mathcal{O}(\sfrac{1}{\epsilon})$ sampled trajectories. In contrast, the inverse-quadratic bias decay established above implies that the jackknife estimator attains the same bias level with only $\mathcal{O}(\sfrac{1}{\sqrt{\epsilon}})$ samples. Thus, the jackknife correction substantially reduces the number of required language-model generations for a given accuracy.

It is easy to see that an immediate corollary of Lemma \ref{Lemma:Quad} is convergence in expectation of $\hat{p}^{(\text{pow})}_{\alpha, \ \text{JK}} (\cdot)$ to the true power distribution $p^{(\text{pow})}_{\alpha} (\cdot)$ as the number of Monte Carlo trajectories $M_t \rightarrow \infty$. In Appendix \ref{App:HighProb}, we even derive high-probability guarantees on this convergence.

\subsection{Scalable Power Sampling}
Based on the above results, this section develops a practical algorithm for scalable, efficient trajectory sampling from LLM power distributions. Algorithm \ref{Approx_alg} presents our pseudocode for the single-token setting, while Algorithm in Appendix \ref{App:Algo_B} extends this implementation to batched token settings. Algorithm \ref{Approx_alg} approximates the global power distribution via sequential, local estimations. At each autoregressive step $t$, the algorithm identifies a set of promising candidates, $\mathcal{G}_t$, using a Top-$K_t$ filter on the base model's logits. To evaluate the long-term viability of each candidate under the power objective, we perform $M_t$ independent Monte Carlo lookahead rollouts. We can further restrict the rollout length to the next $H_t$ tokens (see Appendix \ref{App:Algo_B}). Such truncation further reduces inference times. These trajectories are aggregated to compute likelihood scaling factors, which are subsequently refined using a Jackknife estimator ($\hat{p}^{(\text{pow})}_{\alpha, \text{JK}}$) to mitigate finite-sample bias. The next token $x_t$ is then sampled according to these bias-corrected probabilities, effectively steering the generation toward high-likelihood region token-by-token. Finally, we note that since we evaluate $K_t$ candidates using $M_t$ independent rollouts at each step $t$, the total inference burden is bounded by $\mathcal{O}\left(\sum_{t=0}^{T-1}K_tM_t\right)$ generated sequences. This structured cost offers a distinct efficiency advantage over rejection sampling (Best-of-N), which scales linearly as $\mathcal{O}(N)$ full trajectories. Unlike Best-of-N, which must ``pay'' for $N$ complete generations even when errors occur early, our formulation allows for targeted compute allocation, investing budget $(K_t, M_t)$ specifically at critical decision points to steer generation, thereby achieving high-fidelity sampling without brute-force costs. 

\begin{algorithm}[tb]
   \caption{Scalable Power Sampling for a Query $\bm{q}$}
   \label{Approx_alg}
\begin{algorithmic}[1]
   \STATE {\bfseries Inputs:} Power distribution exponent $\alpha\in [1, \infty)$, upper bounds on trajectory length $T$, size of groups $\{K_t\}^{T-1}_{t=0}$, and total budgets $\{M_t\}^{T-1}_{t=0}$
   \FOR{$t=0$ $\rightarrow$ $T-1$}
   \STATE \text{Compute} \ $\mathcal{G}_t(\bm{x}_{0:t-1}) = \text{Top}@K_t\left[p(\cdot|\bm{q},\bm{x}_{0:t-1})\right]$
   \FOR {$x'_t \in \mathcal{G}_t(\bm{x}_{0:t-1})$}
   \STATE Sample $M_t$ i.i.d. completions
   \STATE Compute $\hat{\zeta}_t(x_t^{\prime})$ and $\hat{\zeta}^{(\text{LOO})}_{t, -s}(x_t^{\prime})$ from Eq. \ref{zeta}  and \ref{LOO_zeta}
   
   \ENDFOR
   \STATE Set  $\mathcal{V} = \mathcal{G}_t(\bm{x}_{0:t-1})$ in Eq. \ref{JK_MC_estimate}
   \STATE Compute the Jackknife estimator 
   \STATE \text{Sample $x_t\in \mathcal{G}_t(\bm{x}_{0:t-1})$ w.p. $\hat{p}^{(\text{pow})}_{\alpha, \ \text{JK}} (x_t|\bm{q}, \bm{x}_{0:t-1})$}
   \STATE \text{Concatenate the sampled tokens  $\bm{x}_{0:t} = \bm{x}_{0:t-1} \oplus x_t$}
   \ENDFOR
   \STATE \text{Sample the $T^{\text{th}}$ token using low temperature sampling}
   \STATE \text{Concatenate the sampled token  $\bm{x}_{0:T} = \bm{x}_{0:T-1} \oplus x_T$}
   \STATE \textbf{Return } Sampled trajectory $\bm{x}_{0:T}=\{x_0,\ldots, x_T\} $
\end{algorithmic}
\end{algorithm}

\section{Experiments \& Results}

\begin{figure}[t]
  \centering
  \includegraphics[width=\columnwidth]{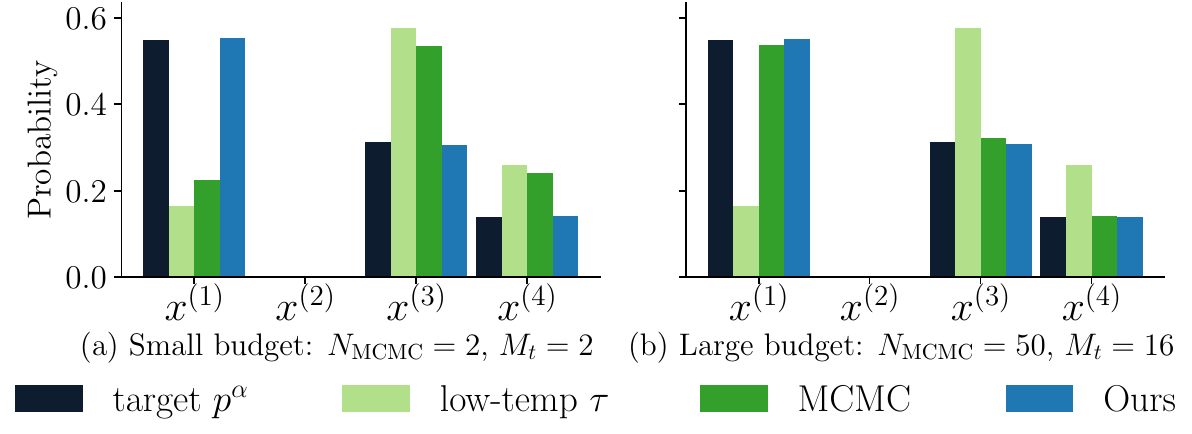}
\caption{Toy example comparing the target power distribution $p^\alpha$, the low-temperature distribution, and the empirical histograms of MCMC and our method ($\alpha=4$, $\tau=1/\alpha=0.25$).}
  \label{fig:toy}
\end{figure}

\subsection{Building Intuition: Planning vs. Guessing}\label{sec:toy}
To build intuition, we consider a simple reasoning problem: \textcolor{blue}{``what is the result of 2+2''}. In this example, the model generates sequences of fixed length $L=4$ from: 
\begin{equation*}
   \mathcal{V} = \{\texttt{PLAN}, \texttt{GUESS}, \texttt{CALC}, \texttt{ANSWER4}, \texttt{ANSWER5}, \texttt{EOS}\}.
\end{equation*}
We imagine the case when the base model assigns a higher probability to a guessing strategy than to an explicit planning strategy, reflecting a locally plausible but globally suboptimal shortcut: $p(\texttt{PLAN}) = 0.4$, \text{and} $p(\texttt{GUESS}) = 0.6$. 

If the model selects $\texttt{PLAN}$, it deterministically transitions to $\texttt{CALC}$, i.e., $p(\texttt{CALC} \mid \texttt{PLAN}) = 1$. Conditioned on this prefix, the model produces the correct answer with high probability, such that: $p(\texttt{ANSWER4} \mid \texttt{PLAN}, \texttt{CALC}) = 0.95$, and the incorrect answer (\texttt{ANSWER5}) with complementary probability. In contrast, if the model directly selects $\texttt{GUESS}$, it outputs: $p(\texttt{ANSWER4} \mid \texttt{GUESS}) = 0.55$, again followed by the end-of-sequence token \texttt{EOS}. All sequences terminate after producing an answer token.

In this toy problem, the model faces an early choice between a locally attractive shortcut and a more structured reasoning strategy. Although \texttt{GUESS} has a higher probability than \texttt{PLAN} at the first step, committing to the \texttt{PLAN} $\rightarrow$ \texttt{CALC} trajectory leads to a much higher chance of correct answers.

Concretely, the four feasible sequences are
\[
\begin{aligned}
x^{(1)} &= (\texttt{PLAN}, \texttt{CALC}, \texttt{ANSWER4}, \texttt{EOS}), \\
x^{(2)} &= (\texttt{PLAN}, \texttt{CALC}, \texttt{ANSWER5}, \texttt{EOS}), \\
x^{(3)} &= (\texttt{GUESS}, \texttt{ANSWER4}, \texttt{EOS}, \texttt{EOS}), \\
x^{(4)} &= (\texttt{GUESS}, \texttt{ANSWER5}, \texttt{EOS}, \texttt{EOS}),
\end{aligned}
\]
with base-model probabilities
\begin{align*}    
&p(x^{(1)}) = 0.38\quad
p(x^{(2)}) = 0.02\quad \\
&p(x^{(3)}) = 0.33\quad
p(x^{(4)}) = 0.27.
\end{align*}

If we apply low-temperature decoding with $\alpha = 4$ (i.e., $\tau = 0.25$), the model becomes confident in its \emph{local} preference and overwhelmingly selects \texttt{GUESS} at the first step:
\[
p_\tau(\texttt{PLAN}) = \frac{0.4^{4}}{0.4^{4} + 0.6^{4}} \approx 0.165,
\ \ \ \ 
p_\tau(\texttt{GUESS}) \approx 0.835.
\]
As a result, low-temperature sampling amplifies the shortcut, despite its lower probability of leading to a correct answer.

In contrast, the power distribution $\pi(x) \propto p(x)^{4}$ evaluates entire trajectories rather than individual tokens. Under this distribution, the structured \texttt{PLAN} branch dominates:
\begin{align*}
    &\pi(x^{(1)}) \approx 0.55,\quad
\pi(x^{(3)}) \approx 0.31,\quad \\ 
&\pi(x^{(4)}) \approx 0.14,\quad
\pi(x^{(2)}) \approx 0.
\end{align*}

This illustrates why low-temperature decoding can fail to recover the desired power distribution: sharpening local probabilities does not account for downstream consequences. While both MCMC-based power sampling and our estimator-based method converge to $\pi$ with sufficient computation, their finite-budget behaviour differs substantially. MCMC tends to inherit the bias of its low-temperature proposal when few steps are used, whereas our rollout-based future correction explicitly reweights early decisions based on expected future outcomes, yielding samples that more closely match $\pi$ even at small budgets (see Figure~\ref{fig:toy}).

\begin{figure*}[t!]
\centering \includegraphics[width=.97\linewidth]{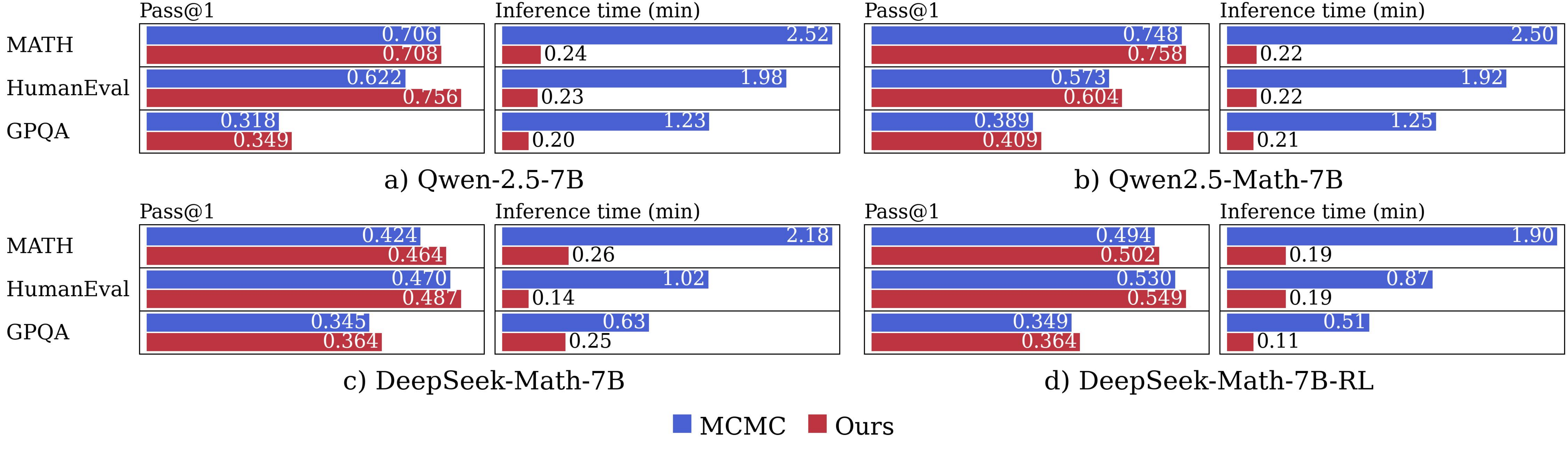}
\caption{Pass@1 performance and per-prompt inference time for Qwen-2.5-7B (top left), Qwen2.5-Math-7B (top right), DeepSeek-Math-7B (bottom left), and DeepSeek-Math-7B-RL (bottom right).}
\label{fig:time}
\end{figure*}

\subsection{Large-Scale Evaluation} \label{Sec:Evals}
We evaluate our scalable power-distribution sampling approximation on large-scale benchmarks. We assess whether training- and verifier-free autoregressive sampling can recover RL-based post-training gains without MCMC's inference overhead. Comparing against standard decoding, low-temperature sampling, Best-of-N, and MCMC, we characterise accuracy–latency trade-offs across mathematics, code generation, and knowledge-intensive QA.

\paragraph{Models.} We test four base models to assess robustness across families and training regimes. We use Qwen2.5-7B~\cite{team2024qwen2} to assess general reasoning, and two math-focused models, Qwen2.5-Math-7B~\cite{team2024qwen2} and DeepSeek-Math-7B~\cite{shao2024deepseekmath}, to examine transferability across fine-tuning pipelines. Finally, we evaluate the RL-tuned DeepSeek-Math-7B-RL (via GRPO)~\cite{shao2024deepseekmath} to determine if our method remains effective on models already sharpened by RL.

\paragraph{Benchmarks.} We use three complementary benchmarks. For mathematics, we use MATH500~\cite{lightman2023let} to assess multi-step problem solving via a final-answer exact match. For code, we employ HumanEval~\cite{chen2021evaluating}, comprising 164 Python tasks assessed via functional correctness. For QA, we use GPQA-diamond~\cite{rein2024gpqa}, a set of 198 expert-authored questions in biology, physics, and chemistry designed to challenge non-experts.

\paragraph{Baselines.} For all methods, we cap the maximum generation length at $T_{\max}=3072$ tokens, as suggested in~\citet{ReasoningbySampling}; generation may terminate earlier upon emission of the \texttt{EOS} token. Full hyperparameter settings are provided in Appendix~\ref{App:Diverse}. For low-temperature decoding~\cite{wang2020contextual}, we set the sampling temperature to $1/\alpha=0.25$ and sample autoregressively from $p^{(\mathrm{low.\ temp})}_\alpha$. For the Best-of-N baseline~\cite{stiennon2020learning}, we generate $N_{\mathrm{BoN}}=32$ independent completions and select the candidate with the highest model score under the base model’s log-likelihood. For MCMC power sampling, we follow the implementation and hyperparameters of~\citet{ReasoningbySampling}: $\alpha=4$, block size $B=192$ (i.e., $T_{\max}/16$), $N_{\mathrm{MCMC}}=10$ MCMC steps per stage.

Our method uses the same settings as MCMC power sampling for a fair comparison ($\alpha=4$, $B=192$) and performs autoregressive power sampling with block-level lookahead (see Appendix \ref{App:Algo_B}). At each step, we restrict attention to the Top-$K_t=8$ candidate tokens, estimate future-scaling factors using $M_t=8$ Monte Carlo rollouts per candidate. 

For RL baselines, we use the GRPO-trained checkpoint from~\citet{shao2025spurious} for the Qwen and the publicly released DeepSeek-Math-7B-RL model for DeepSeek, both of which are post-trained on the MATH training split. We report these post-trained models as \texttt{GRPO(MATH)} in Table~\ref{tab:perforamance}.

\begin{table}[t]
\centering
\small
\setlength{\tabcolsep}{10pt}
\renewcommand{\arraystretch}{1.15}
\resizebox{0.45\textwidth}{!}{%
\begin{tabular}{lccc}
\toprule
 & \textbf{MATH500} & \textbf{HumanEval} & \textbf{GPQA} \\
\midrule
\multicolumn{4}{l}{\textbf{Qwen2.5-7B}}\\
\hspace{1.5em}Base & 0.498 & 0.329 & 0.278   \\
\hspace{1.5em}Low-temperature & 0.628 & 0.524 & 0.303  \\
\hspace{1.5em}Best-of-N & 0.650 & 0.609 & 0.282  \\
\hspace{1.5em}MCMC Power Sampling & 0.706 & 0.622 & 0.318   \\
\hspace{1.5em}\textbf{Power Sampling (ours)} & \textbf{0.708} & \textbf{0.756} & \textbf{0.349}\\
\midrule
\hspace{1.5em}GRPO (MATH) & 0.740 & 0.561 & 0.354 \\
\midrule
\multicolumn{4}{l}{\textbf{Qwen2.5-Math-7B}}\\
\hspace{1.5em}Base & 0.496 & 0.329 & 0.278   \\
\hspace{1.5em}Low-temperature & 0.690 & 0.512 & 0.353  \\
\hspace{1.5em}Best-of-N & 0.684 & 0.512  &  0.343 \\
\hspace{1.5em}MCMC Power Sampling & 0.748 & 0.573 & 0.389 \\
\hspace{1.5em}\textbf{Power Sampling (ours)} & \textbf{0.758} &  \textbf{0.604} & \textbf{0.409} \\
\midrule
\hspace{1.5em}GRPO (MATH) & 0.785 & 0.537 & 0.399 \\
\midrule

\multicolumn{4}{l}{\textbf{DeepSeek-Math-7B}}\\
\hspace{1.5em}Base & 0.362 & 0.415 & 0.333  \\
\hspace{1.5em}Low-temperature & 0.366 & 0.427 & \textbf{0.430} \\
\hspace{1.5em}Best-of-N & 0.420 & 0.433 & 0.338 \\
\hspace{1.5em}MCMC Power Sampling & 0.424 & 0.470 & 0.345 \\
\hspace{1.5em}\textbf{Power Sampling (ours)} & \textbf{0.464} & \textbf{0.487} & 0.364\\
\midrule
\hspace{1.5em}GRPO (MATH) & 0.492 & 0.524 & 0.333\\
\midrule

\end{tabular}
}
\caption{Performance comparison of Power Sampling against base models and sampling baselines.}
\label{tab:perforamance}
\end{table}

\paragraph{Main Results.} Table~\ref{tab:perforamance} summarises pass@1 accuracies on MATH500, HumanEval, and GPQA across the four models. Relative to standard decoding, low-temperature sampling generally improves performance, but the gains are modest in some settings, suggesting that purely local token-level sharpening has limited ability to extract higher-quality trajectories across tasks. The Best-of-N baseline improves over low-temperature decoding in several regimes, but remains consistently weaker than power-sampling-based methods. This behaviour is expected, as selecting the highest-likelihood completion under the base model does not explicitly target the globally sharpened power distribution. Power sampling methods show the best performance across all the sampling methods. Although MCMC and our method target the same power distribution in the limit of infinite compute, their finite-budget behaviour differs, consistent with the intuition in Section~\ref{sec:toy}. With the same sampling budget, our estimator more effectively concentrates probability mass on high-quality trajectories, matching or exceeding MCMC performance with up to $+13.4\%s$. %

Following the evaluation protocol of \citet{ReasoningbySampling}, we further examine the in-domain and out-of-domain behaviour of reinforcement-learning-based post-training; rows titled $\texttt{GRPO(Math)}$ in Table \ref{tab:perforamance}. Specifically, we consider a GRPO model trained exclusively on mathematical data and evaluate its generalisation performance not only on MATH500, but also on the out-of-domain tasks of code generation and knowledge-intensive question answering. This setting allows us to assess whether improvements induced by reward-based post-training transfer extend beyond the domain on which the reward signal was defined. Overall, our results reproduce the conclusions of \citet{ReasoningbySampling} regarding the effectiveness of MCMC-based power sampling, suggesting that a substantial fraction of the gains attributed to reward-based post-training can be recovered through principled sampling from the base model alone.

\paragraph{Power Sampling Post-Trained LLMs.} Table~\ref{tab:deepseek_rl} examines whether power sampling can further improve performance when applied to an already post-trained model. Specifically, we consider DeepSeek-Math-7B-RL~\cite{shao2024deepseekmath}, which has been post-trained via GRPO on the MATH training split, and evaluate whether power-based sampling remains effective beyond standard decoding.

\begin{table}[h]
\centering
\small
\setlength{\tabcolsep}{10pt}
\renewcommand{\arraystretch}{1.15}
\resizebox{0.45\textwidth}{!}{%
\begin{tabular}{lccc}
\toprule
 & \textbf{MATH500} & \textbf{HumanEval} & \textbf{GPQA} \\
\midrule
\multicolumn{4}{l}{\textbf{DeepSeek-Math-7B-RL}}\\
\hspace{1.5em}Base & 0.492 & 0.524 & 0.333 \\
\hspace{1.5em}Low-temperature & 0.412 & 0.524 & 0.303 \\
\hspace{1.5em}Best-of-N & 0.492 & 0.507 & 0.297 \\
\hspace{1.5em}MCMC Power Sampling & 0.494 & 0.530 & 0.349 \\
\hspace{1.5em}\textbf{Power Sampling (ours)} & \textbf{0.502} & \textbf{0.549} & \textbf{0.364} \\
\bottomrule
\end{tabular}
}
\caption{Pass@1 accuracy on MATH500, HumanEval, and GPQA for DeepSeek-Math-7B-RL (GRPO-MATH).}
\label{tab:deepseek_rl}
\end{table}

Across all three benchmarks, power sampling on top of the GRPO-trained model yields gains over standard decoding. Notably, low-temperature sampling degrades performance on MATH500 and GPQA (e.g., MATH500: $0.492\!\rightarrow\!0.412$), suggesting that after GRPO post-training, the token distribution is already sharpened in a different way and additional local temperature scaling can be ineffective. In contrast, both MCMC power sampling and our autoregressive approximation provide small improvements, with our method achieving the best performance across all benchmarks. Overall, these results indicate that power-based sampling can still enhance an RL-post-trained model, though the magnitude of improvement is smaller than for non-post-trained LLMs, consistent with the view that GRPO already performs a form of distribution sharpening.

\paragraph{Inference Efficiency.} Having established that we achieve improved pass@1 performance, we now turn to understanding the inference-time efficiency gains of our approximation. Figure~\ref{fig:time} reports inference time for MCMC and our autoregressive approximation across all four models and three benchmarks. Across all settings, our method is consistently faster, achieving speedups of up to approximately 
10×. In the slowest MCMC regime (Qwen2.5-Math-7B on MATH500), MCMC requires an average of 2.5 minutes per prompt, compared to 0.22 minutes for our method. Even in the closest regime (DeepSeek-Math-7B on GPQA), MCMC still incurs a substantially higher cost, taking 0.63 minutes per prompt versus 0.25 minutes for our approach. We also notice that completion lengths are similar between MCMC and our method ($\sim$700 tokens on average), indicating that the observed speedups result from eliminating iterative MCMC refinement rather than differences in output length, while preserving the benefits of power-based sampling.

\subsection{Diversity Analysis \& Ablation}

\begin{figure}[t!]
\centering \includegraphics[width=.9\linewidth]{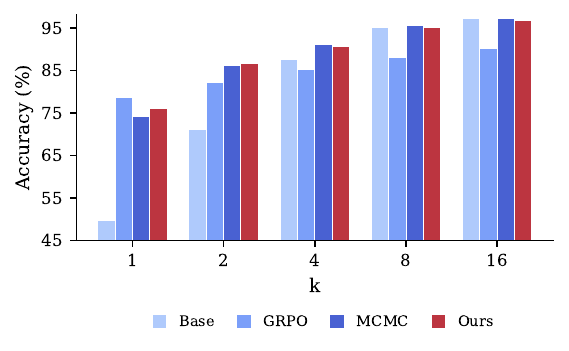} 
\caption{Pass@K performance on MATH500 with Qwen2.5-Math-7B of the base model, GRPO, MCMC, and our method. GRPO improves pass@1 but quickly shows diversity collapse.}
\label{fig:passk}
\end{figure}
\paragraph{Pass@K Results.} We report pass@K performance for K$\in\{1,2,4,8,16\}$ by sampling K independent completions per prompt and marking a problem as solved if any of the completions is correct. We summarise the results on the MATH500 obtained with Qwen2.5-Math-7B in Figure~\ref{fig:passk}. Further Pass@K results can be found in Appendix \ref{App:Diverse}.

We note that although GRPO~\cite{shao2025spurious} yields a strong boost at pass@1, its gains diminish as K increases, consistent with recent observations of \emph{diversity collapse} (mode-seeking behaviour that reduces coverage of alternative solution paths and suppresses pass@K performance~\cite{yue2025does}). In contrast, power-distribution sampling sharpens the sampling distribution at inference time without updating parameters, improving pass@1 over the base model while better preserving diversity; our method closely tracks base model performance for larger K (e.g., K$\ge 8$) and is stronger at small K (notably K$\in\{1,2,4\}$), matching patterns reported in prior power-sampling work~\cite{ReasoningbySampling}.

\begin{figure}[t]
  \centering
  \includegraphics[width=\columnwidth]{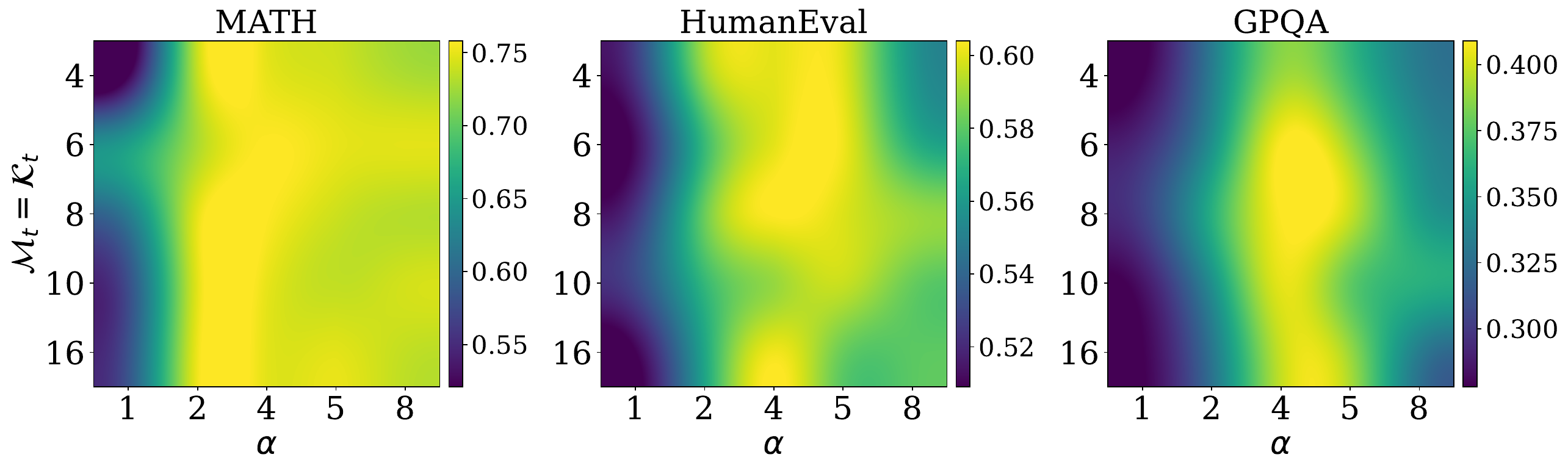}
  \caption{Pass@1 on MATH500, HumanEval, and GPQA for Qwen2.5-Math-7B under different hyperparameters.}
  \label{fig:ablation}
\end{figure}

\paragraph{Ablation Study.} We analyse sensitivity to the power exponent $\alpha$, rollout budget $M_t$, and candidate set size $K_t$. Figure~\ref{fig:ablation} shows the method is robust to $K_t$ and $M_t$, where moderate values offer a favourable accuracy–compute trade-off. In contrast, performance is sensitive to $\alpha$. Intermediate values (e.g., $\alpha \in \{4,5\}$) yield the strongest results, consistent with MCMC findings~\cite{ReasoningbySampling}, surpassing both the base distribution ($\alpha=1$) and overly aggressive sharpening ($\alpha=8$). For those reasons, we fix $\alpha=4$ and $K_t=M_t=8$ in our experiments. Further analysis and results on additional models are provided in Appendix \ref{App:Diverse}.

\section{Related Works}
\paragraph{Reinforcement Learning and Distribution Sharpening.}
RL post-training has emerged as a dominant paradigm for enhancing the reasoning capabilities of LLMs. Approaches such as Rejection Sampling Fine-tuning \cite{liu2023statistical, xiong2025minimalist} and GRPO and its variants \cite{shao2024deepseekmath, yu2025dapo} optimise models against automated verifiers; see the surveys in \citet{xu2025towards,zhang2025survey} for potential applications and an in-depth discussion.
\paragraph{Inference-Time Reasoning and Sampling.}
Various inference-time strategies exist to bypass training complexity. While low-temperature sampling \cite{wang2020contextual} sharpens local probabilities, it ignores global quality, and Best-of-N \cite{brown2024large, huang2025best} only implicitly targets rewards; we outperform both (see Section \ref{Sec:Evals}). Although our use of Monte Carlo rollouts shares conceptual roots with Tree of Thoughts \cite{yao2023tree} and MCTS \cite{zhang2023planning,zhang2024accessing, cao2025large}, our approach avoids their complex state tracking and backtracking, remaining a pure \textit{sampling method} without persistent memory. Similarly, while speculative \cite{leviathan2023fast, chen2023accelerating,zimmer2024mixture} and lookahead decoding \cite{fu2024break} use ``draft-then-verify'' parallelism like our batched algorithm (Appendix \ref{App:Algo_B}), they aim to preserve $p(x)$ for speed, whereas we explicitly alter it to $p^\alpha(x)$ for reasoning. Finally, inference-aware training \cite{vijayakumar2016diverse,wang2022self,chow2024inference} is orthogonal to our method and can be combined in the future.

Most relevantly, \citet{ReasoningbySampling} utilised the concept of \textit{Power Distribution Sampling}, proving that sampling from a globally sharpened distribution $p^\alpha$ allows base models to match or exceed the performance of RL-trained models. However, their proposed implementation relies on MCMC, which suffers from high latency due to iterative resampling and slow mixing times in high-dimensional token spaces \cite{levin2017markov, gheissari2018exponentially}. Our approach targets the same theoretical distribution but employs a novel autoregressive approximation, eliminating the need for iterative MCMC and significantly reducing inference latency. Theoretically, targeting $p^\alpha$ aligns with sampling from an annealed or tempered distribution \cite{neal2001annealed}, a concept widely used in statistical physics and diffusion models to avoid mode collapse. While recent works have applied annealing to diffusion guidance, we demonstrate its efficacy for discrete autoregressive reasoning. We also note that sampling from the power distribution can also be viewed as sampling from an energy-based model with inverse temperatures; see \citet{ackley1985learning,lecun2006tutorial}.

\section{Conclusions and Future Work} 
We demonstrate that latent reasoning capabilities in base models are accessible via \textit{distribution sharpening}. By bridging global distributions and local policies (Theorem \ref{Main_theorem}), our training-free sampler matches GRPO on MATH500, HumanEval, and GPQA. It achieves this with over $10\times$ lower latency than MCMC, providing a scalable alternative to post-training pipelines.

Future work includes reducing variance via control variates \cite{lavenberg1981perspective}, optimising compute via adaptive budgets \cite{alomrani2025reasoning}, and amortising rollout costs with speculative decoding \cite{leviathan2023fast}. We also plan to extend our method to agentic settings \cite{lu2024ai,grosnit2025kolbbasedexperientiallearninggeneralist,yuksekgonul2026learning}.

\section*{Impact Statement}
\paragraph{Democratisation of Advanced Reasoning.} The primary impact of this work is the democratisation of high-performance reasoning capabilities. By demonstrating that base models can match the performance of RL-finetuned models (e.g., GRPO) through inference-time sampling alone, we reduce the necessity for massive computational resources typically required for post-training alignment. This lowers the barrier to entry for academic researchers and smaller organisations, allowing them to deploy state-of-the-art reasoning systems without access to large-scale GPU clusters for training.

\paragraph{Environmental Considerations.} From an environmental perspective, our approach offers a favourable trade-off. While our autoregressive lookahead mechanism increases inference cost relative to standard decoding, it eliminates the substantial carbon footprint associated with iterative RL fine-tuning and reduces inference energy consumption by an order of magnitude compared to MCMC-based baselines. We believe that this represents a significant step towards ``Green AI'' for high-reliability tasks.

\paragraph{Safety and Dual-Use Risks.} However, our method also introduces specific safety considerations. Because our algorithm amplifies the latent probability mass of the base model (``distribution sharpening'') without an external reward model or human feedback loop, it relies entirely on the safety alignment of the underlying base model. If a base model contains latent hazardous knowledge or biases, our method could theoretically ``sharpen'' these behaviours just as effectively as it sharpens mathematical reasoning. Consequently, we strongly recommend that this sampling strategy be applied to base models that have undergone rigorous safety filtering or instruction tuning before deployment.

\bibliographystyle{icml2026}
\bibliography{icml2026_conference}
\appendix
\onecolumn
\section{Theoretical Results}
\subsection{Proof of Theorem \ref{Main_theorem}}
\begin{theorem}
Let $p(x)$ be a pretrained large language model, $\bm{q}$ an input prompt and $\alpha > 1$ be an exponent parameter. Then, for any partially generated sequence $\bm{x}_{0:t-1} = \{x_0,\ldots, x_{t-1}\}$ we have:
\begin{align}\label{main_res}
  p^{(\text{pow})}_{\alpha} (x_t|\bm{q}, \bm{x}_{0:t-1})= \frac{p^{\alpha}(x_t|\bm{q},\bm{x}_{0:t-1})\zeta_t(x_t)}{\sum_{x'_t\in\mathcal{V}}p^{\alpha}(x'_t|\bm{q},\bm{x}_{0:t-1})\zeta_t(x'_t)}.  
\end{align}
where for each token $x_t^{\prime} \in \mathcal{V}$, we have that: $\zeta_t(x^{\prime}_t) = \sum_{\bm{x}_{t+1:T}}p^{\alpha}(\bm{x}_{t+1:T}|\bm{q}, \bm{x}_{0:t-1}, x^{\prime}_t)$.  
\end{theorem}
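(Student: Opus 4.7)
The plan is to derive the identity directly from the definition of $p^{(\text{pow})}_\alpha$ given in Equation \ref{low_temp_and_power_eq} by applying the standard autoregressive chain rule to the joint probability $p(\bm{q},\bm{x}_{0:T})$ and then isolating the terms that depend on the candidate token $x_t$ from those that depend on the future completion $\bm{x}_{t+1:T}$. Since the exponentiation by $\alpha$ distributes across a product, the chain rule lifts cleanly to $p^\alpha$, which is the key algebraic manipulation making the whole proof possible.

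First, I would write the joint distribution in the numerator as
\begin{equation*}
p(\bm{q},\bm{x}_{0:t-1},x_t,\bm{x}_{t+1:T}) \;=\; p(\bm{q},\bm{x}_{0:t-1})\,p(x_t\mid \bm{q},\bm{x}_{0:t-1})\,p(\bm{x}_{t+1:T}\mid \bm{q},\bm{x}_{0:t-1},x_t),
\end{equation*}
and raise both sides to the power $\alpha$. The resulting three-factor product lets me pull $p^{\alpha}(\bm{q},\bm{x}_{0:t-1})$ and $p^{\alpha}(x_t\mid \bm{q},\bm{x}_{0:t-1})$ outside the sum over $\bm{x}_{t+1:T}$, leaving precisely the marginalisation that defines $\zeta_t(x_t)$. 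So the numerator becomes $p^{\alpha}(\bm{q},\bm{x}_{0:t-1})\cdot p^{\alpha}(x_t\mid\bm{q},\bm{x}_{0:t-1})\,\zeta_t(x_t)$.

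Second, I would apply the same factorisation to the denominator, but now the outer sum ranges over $\bm{x}_{t:T}$, which I split as a sum over $x'_t\in\mathcal V$ followed by a sum over $\bm{x}_{t+1:T}$. The identical chain-rule step produces $p^{\alpha}(\bm{q},\bm{x}_{0:t-1})\sum_{x'_t}p^{\alpha}(x'_t\mid\bm{q},\bm{x}_{0:t-1})\,\zeta_t(x'_t)$. Cancelling the common factor $p^{\alpha}(\bm{q},\bm{x}_{0:t-1})$ between numerator and denominator yields exactly Equation \ref{main_res}.

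There is no genuine obstacle here; the only place to be careful is that the chain-rule decomposition commutes with the exponent, and that the factors independent of $\bm{x}_{t+1:T}$ are correctly moved outside the marginalisation sum in both the numerator and denominator. Once those bookkeeping steps are explicit, the scaling-factor form of $\zeta_t$ as a marginal over conditional future likelihoods drops out immediately, and the identity follows by cancellation.
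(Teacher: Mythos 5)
Your proposal is correct and follows essentially the same route as the paper's proof: both rely on the chain-rule factorisation $p(\bm q,\bm x_{0:t-1},x_t,\bm x_{t+1:T})=p(\bm q,\bm x_{0:t-1})\,p(x_t\mid\bm q,\bm x_{0:t-1})\,p(\bm x_{t+1:T}\mid\bm q,\bm x_{0:t-1},x_t)$, the fact that exponentiation by $\alpha$ distributes across the product, and cancellation of the common prefix factor $p^{\alpha}(\bm q,\bm x_{0:t-1})$ between numerator and denominator. Your version is in fact a touch more direct than the paper's, which first rewrites the ratio as $\bigl(1+\sum_{x'_t\neq x_t}N(x'_t)/N(x_t)\bigr)^{-1}$ before performing the same factorisation and cancellation; this intermediate step is immaterial to the argument.
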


\begin{proof}
Let us fix some $t\in [0,T)$ and consider the partially generated sequence $\bm{x}_{0:t-1}$ such that:
\begin{align*}
    \bm{x}_{0:t-1} = \begin{cases}
         x_0,\ldots, x_{t-1}, \ \ \text{if } \  t \ge  1, \\
         \{\},  \ \ \text{if } \  t =  0.
    \end{cases}
\end{align*}
For a fixed token $x_t$, let us use the following notation:
\begin{align*}
    &\bm{x}_{0:t} = \{x_0,\ldots, x_{t-1}, x_t\}, \ \ \ \bm{x}_{0:t-1},x'_t = \{x_0,\ldots, x_{t-1}, x'_t\}, \ \ \ \bm{x}_{t:T} = \{x_t,\ldots, x_T\}.
\end{align*}
Consider the conditional probability:
\begin{align*}
    &p^{(\text{pow})}_{\alpha}(x_{t}|\bm{q}, \bm{x}_{0:t-1}) = \frac{\sum_{\bm{x}_{t+1:T}} p^{\alpha}(\bm{q}, \bm{x}_{0:t-1}, x_t, \bm{x}_{t+1:T})}{\sum_{\bm{x}_{t:T}} p^{\alpha}(\bm{q}, \bm{x}_{0:t-1},\bm{x}_{t:T})}  \\\nonumber
    & \hspace{8em}=\frac{\sum_{\bm{x}_{t+1:T}} p^{\alpha}(\bm{q}, \bm{x}_{0:t}, \bm{x}_{t+1:T})}{\sum_{\bm{x}_{t+1:T}} p^{\alpha}(\bm{q}, \bm{x}_{0:t}, \bm{x}_{t+1:T}) + \sum_{x'_t\neq x_t}\sum_{\bm{x}_{t+1:T}} p^{\alpha}(\bm{q}, \bm{x}_{0:t-1}x'_t, \bm{x}_{t+1:T})}  \\\nonumber
    &\hspace{8em}=\frac{1}{1 + \sum_{x'_t\neq x_t}\frac{\sum_{\bm{x}_{t+1:T}} p^{\alpha}(\bm{q}, \bm{x}_{0:t-1}x'_t, \bm{x}_{t+1:T})}{\sum_{\bm{x}_{t+1:T}} p^{\alpha}(\bm{q}, \bm{x}_{0:t}, \bm{x}_{t+1:T})}}  \\\nonumber
    &\hspace{8em}=\frac{1}{1 + \sum_{x'_t\neq x_t}\frac{\sum_{\bm{x}_{t+1:T}}p^{\alpha}(\bm{x}_{t+1:T}|\bm{q}, \bm{x}_{0:t-1}x'_t)p^{\alpha}(x'_t|\bm{q}, \bm{x}_{0:t})p^{\alpha}(\bm{q}, \bm{x}_{0:t-1})}{\sum_{\bm{x}_{t+1:T}}p^{\alpha}(\bm{x}_{t+1:T}|\bm{q}, \bm{x}_{0:t})p^{\alpha}(x_t|\bm{q}, \bm{x}_{0:t-1})p^{\alpha}(\bm{q}, \bm{x}_{0:t-1})}}  \\\nonumber
    &\hspace{8em}=\frac{1}{1 + \sum_{x'_t\neq x_t}\frac{p^{\alpha}(x'_t|\bm{q}, \bm{x}_{0:t-1})}{p^{\alpha}(x_t|\bm{q}, \bm{x}_{0:t-1})}\frac{\sum_{\bm{x}_{t+1:T}}p^{\alpha}(\bm{x}_{t+1:T}|\bm{q}, \bm{x}_{0:t-1}x'_t)}{\sum_{\bm{x}_{t+1:T}}p^{\alpha}(\bm{x}_{t+1:T}|\bm{q}, \bm{x}_{0:t})}} = \frac{1}{1 + \sum_{x'_t\neq x_t}\frac{p^{\alpha}(x'_t|\bm{q}, \bm{x}_{0:t-1})}{p^{\alpha}(x_t|\bm{q}, \bm{x}_{0:t-1})}\frac{\zeta_t(x'_t)}{\zeta_t(x_t)}} \\\nonumber
    &\hspace{8em}=\frac{p^{\alpha}(x_t|\bm{q}, \bm{x}_{0:t-1})\zeta_t(x_t)}{\sum_{x'_t\in\mathcal{V}}p^{\alpha}(x'_t|\bm{q}, \bm{x}_{0:t-1})\zeta_t(x'_t)}.
\end{align*}

To complete the proof, let us consider the last token case. For the sequence of tokens $\bm{x} = x_1,\ldots, x_T$ we have:
\begin{align*}
    &p^{(\text{pow})}_{\alpha}(x_T|\bm{q}, x_0,\ldots x_{T-1}) = \frac{p^{\alpha}(\bm{q}, \bm{x}_{0:T-1}, x_{T})}{\sum_{x'_T\in\mathcal{V}}p^{\alpha}(\bm{q}, \bm{x}_{0:T-1}, x'_T)}  \\\nonumber
    &\hspace{10.5em}=\frac{p^{\alpha}(x_T|\bm{q}, \bm{x}_{0:T-1})p^{\alpha}(\bm{q}, \bm{x}_{0:T-1})}{\sum_{x'_T\in\mathcal{V}}p^{\alpha}(x'_T|\bm{q}, \bm{x}_{0:T-1})p^{\alpha}(\bm{q}, \bm{x}_{0:T-1})} = \frac{p^{\alpha}(x_T|\bm{q}, x_0,\ldots,x_{T-1})}{\sum_{x'_T\in\mathcal{V}}p^{\alpha}(x'_T|\bm{q}, x_0,\ldots,x_{T-1})} \\\nonumber
    &\hspace{10.5em}=p^{(\text{low. temp})}_{\alpha} (x_T|\bm{q}, \bm{x}_{0:T-1}).
\end{align*}
\end{proof}

\subsection{Proof of Lemma \ref{Lemma_bias}}
\begin{lemma}
Let $\hat{p}^{(\text{pow})}_{\alpha} (x_t|\bm{q}, \bm{x}_{0:t-1})$ be the Monte Carlo estimator of the power distribution defined in Equation \ref{Eq:PowerApprox}. Then:
\begin{align*}
    &\mathbb{E}\left[\hat{p}^{(\text{pow})}_{\alpha} (x_t|\bm{q}, \bm{x}_{0:t-1})\right] = p^{(\text{pow})}_{\alpha} (x_t|\bm{q}, \bm{x}_{0:t-1}) + \mathcal{O}\left(\frac{1}{M_t}\right).
\end{align*}
\end{lemma}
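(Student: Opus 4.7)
I would prove the bias bound via the classical delta method for ratio estimators. Introduce compact notation: let $w(x'_t) := p^{\alpha}(x'_t \mid \bm{q},\bm{x}_{0:t-1})$ denote the deterministic token weights, and let $\delta(x'_t) := \hat{\zeta}_t(x'_t) - \zeta_t(x'_t)$ denote the Monte Carlo noise. Since $\alpha > 1$ we have $p^{\alpha-1} \in [0,1]$, so each $\hat{\zeta}_t(x'_t)$ is an empirical mean of $M_t$ i.i.d.\ variables bounded in $[0,1]$; hence $\mathbb{E}[\delta(x'_t)] = 0$ and $\Var[\delta(x'_t)] = \mathcal{O}(1/M_t)$. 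Because the completions for different candidates $x'_t$ are drawn independently, the noises $\{\delta(x'_t)\}_{x'_t \in \mathcal{V}}$ are mutually independent.

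\textbf{Delta-method expansion.} Writing $A := w(x_t)\zeta_t(x_t)$, $B := \sum_{x'_t} w(x'_t)\zeta_t(x'_t)$, $\Delta_A := w(x_t)\delta(x_t)$, and $\Delta_B := \sum_{x'_t} w(x'_t)\delta(x'_t)$, the estimator in Equation~\ref{Eq:PowerApprox} becomes $\hat{p}^{(\text{pow})}_{\alpha}(x_t \mid \cdot) = (A+\Delta_A)/(B+\Delta_B)$, while the target is exactly $A/B$ by Theorem~\ref{Main_theorem}. On the event $\mathcal{E} := \{|\Delta_B| \le B/2\}$, expanding the inverse denominator as a geometric series yields
\[
\frac{A+\Delta_A}{B+\Delta_B} = \frac{A}{B} + \frac{\Delta_A}{B} - \frac{A\,\Delta_B}{B^2} - \frac{\Delta_A\,\Delta_B}{B^2} + \frac{A\,\Delta_B^2}{B^3} + R,
\]
with a remainder $|R|$ dominated by a constant multiple of $(|A|+|\Delta_A|)|\Delta_B|^3 / B^4$. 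Taking expectations, the two first-order terms vanish by unbiasedness of $\hat{\zeta}_t$. Using mutual independence of the $\delta(x'_t)$ across candidates, the cross terms $\mathbb{E}[\Delta_A\Delta_B]$ and $\mathbb{E}[\Delta_B^2]$ reduce to explicit weighted sums of the single-candidate variances $\Var[\delta(x'_t)]$, each of order $\mathcal{O}(1/M_t)$. A Cauchy--Schwarz argument combined with the standard fourth-moment bound for sample means of bounded variables gives $\mathbb{E}[|R|\,\mathbf{1}_{\mathcal{E}}] = \mathcal{O}(M_t^{-3/2})$, which is absorbed into the advertised $\mathcal{O}(1/M_t)$ rate.

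\textbf{Tail control and main obstacle.} The delicate step is the contribution of the complement $\mathcal{E}^c$, on which the geometric expansion is invalid. Here I would exploit the fact that $\hat{p}^{(\text{pow})}_{\alpha}(x_t \mid \cdot) \in [0,1]$ almost surely, so its contribution to the expectation is bounded by $\Pr(\mathcal{E}^c)$. Since $\Delta_B$ is a weighted sum of $|\mathcal{V}|$ independent sample means of $[0,1]$-valued variables with deterministic weights, Hoeffding's inequality yields $\Pr(|\Delta_B| > B/2) \le 2\exp(-c M_t)$ for some constant $c>0$ depending only on $B$ and $\max_{x'_t} w(x'_t)$, which is negligible next to $1/M_t$. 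The main obstacle is ensuring that the hidden constants depend only on the prompt, prefix, and $\alpha$, rather than blowing up: this reduces to the strict positivity of $B$, which holds as long as at least one token in $\mathcal{V}$ admits a completion of positive base-model probability, and this lower bound can be absorbed into the $\mathcal{O}(\cdot)$ constant.
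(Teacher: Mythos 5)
Your proof is correct and follows the same core route as the paper: write the estimator as a perturbed ratio $(A+\Delta_A)/(B+\Delta_B)$, expand the inverse denominator in a (geometric/Taylor) series around $B$, and observe that the first-order terms vanish by unbiasedness of $\hat{\zeta}_t$ while the second-order terms contribute $\mathcal{O}(1/M_t)$ via the mutual independence of the per-candidate noises $\delta(x'_t)$. (Your $\Delta_A, \Delta_B$ are exactly the paper's $\delta(x_t)$ and $\sum_{x'_t}\delta(x'_t)$ once the weights $w(\cdot)=p^\alpha(\cdot|\bm{q},\bm{x}_{0:t-1})$ are absorbed.)

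Where you genuinely add something is in the treatment of the remainder. The paper writes the formal expansion $(1+z)^{-1}=1-z+z^2+\mathcal{O}(z^3)$ and passes the expectation through it without checking that the expansion is valid on the whole sample space or that the tail term is integrable; this is the standard but technically incomplete delta-method shortcut. You close that gap by truncating on the event $\mathcal{E}=\{|\Delta_B|\le B/2\}$, where the geometric series converges with a controlled remainder, and then using two facts the paper leaves implicit: (i) $\hat{p}^{(\text{pow})}_{\alpha}(x_t|\cdot)\in[0,1]$ almost surely because $\hat{\zeta}_t\ge 0$, so the contribution of $\mathcal{E}^c$ is bounded by $\Pr(\mathcal{E}^c)$; and (ii) $\Pr(\mathcal{E}^c)$ decays exponentially in $M_t$ by Hoeffding, since each $\hat{\zeta}_t(x'_t)$ is a sample mean of $[0,1]$-bounded variables and $\Delta_B$ is a deterministic weighted combination of finitely many such independent sample means. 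One minor bookkeeping remark: after truncating, the first-order terms $\mathbb{E}[\Delta_A/B\cdot\mathbf{1}_{\mathcal{E}}]$ are not literally zero but equal $-\mathbb{E}[\Delta_A/B\cdot\mathbf{1}_{\mathcal{E}^c}]$, which Cauchy--Schwarz plus the Hoeffding tail renders exponentially small; this is easily absorbed but should be stated. With that caveat spelled out, your argument is a rigorous version of the paper's calculation and buys a fully justified $\mathcal{O}(1/M_t)$ bound (with constants depending on $B>0$, $|\mathcal{V}|$, and $\alpha$) where the paper's proof is heuristic about the remainder.
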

\begin{proof}
First, we note that: 
\begin{align*}
    &p^{(\text{pow})}_{\alpha} (x_t|\bm{q}, \bm{x}_{0:t-1}) = \frac{\mathbb{E}\left[p^{\alpha}(x_t|\bm{q},\bm{x}_{0:t-1})\sum_{r=1}^{M_t}p^{\alpha-1}(\bm{x}^{[r]}_{t+1:T}|\bm{q},\bm{x}_{0:t-1}, x_t)\right]}{\mathbb{E}\left[\sum_{x'_t\in\mathcal{V}}p^{\alpha}(x'_t|\bm{q},\bm{x}_{0:t-1})\sum_{r=1}^{M_t}p^{\alpha-1}(\bm{x}^{[r]}_{t+1:T}|\bm{q},\bm{x}_{0:t-1}, x'_t)\right]} = \frac{\zeta_t(x_t)p^{\alpha}(x_t|\bm{q},\bm{x}_{0:t-1})}{\sum_{x'_t\in\mathcal{V}}\zeta_t(x'_t)p^{\alpha}(x'_t|\bm{q},\bm{x}_{0:t-1})}.
\end{align*}
Let us denote $\delta(x_t) = \frac{1}{M_t}\sum_{r=1}^{M_t}p^{\alpha}(x_t|\bm{q},\bm{x}_{0:t-1})p^{\alpha-1}(\bm{x}^{[r]}_{t+1:T}|\bm{q},\bm{x}_{0:t-1}, x_t) - \zeta_t(x_t)p^{\alpha}(x_t|\bm{q},\bm{x}_{0:t-1})$. Then, we have:
\begin{align*}
    &\hat{p}^{(\text{pow})}_{\alpha} (x_t|\bm{q}, \bm{x}_{0:t-1}) = \frac{\zeta_t(x_t)p^{\alpha}(x_t|\bm{q},\bm{x}_{0:t-1}) + \delta(x_t)}{\sum_{x'_t\in\mathcal{V}}\left[\zeta_t(x'_t)p^{\alpha}(x'_t|\bm{q},\bm{x}_{0:t-1}) + \delta(x'_t)\right]} \\\nonumber
    &\hspace{8em}=\left(\zeta_t(x_t)p^{\alpha}(x_t|\bm{q},\bm{x}_{0:t-1}) + \delta(x_t)\right)\left[\frac{1}{\sum_{x'_t\in\mathcal{V}}\left[\zeta_t(x'_t)p^{\alpha}(x'_t|\bm{q},\bm{x}_{0:t-1}) + \delta(x'_t)\right]}\right]  \\\nonumber
    &\hspace{8em}=\frac{\left(\zeta_t(x_t)p^{\alpha}(x_t|\bm{q},\bm{x}_{0:t-1}) + \delta(x_t)\right)}{\left[\sum_{x'_t\in\mathcal{V}}\zeta_t(x'_t)p^{\alpha}(x'_t|\bm{q},\bm{x}_{0:t-1})\right]\left[1 + \frac{\sum_{x'_t\in\mathcal{V}}\delta(x'_t)}{\sum_{x'_t\in\mathcal{V}}\zeta_t(x'_t)p^{\alpha}(x'_t|\bm{q},\bm{x}_{0:t-1})}\right]}  \\\nonumber
    &\hspace{8em}=\frac{\zeta_t(x_t)p^{\alpha}(x_t|\bm{q},\bm{x}_{0:t-1}) + \delta(x_t)}{\sum_{x'_t\in\mathcal{V}}\zeta_t(x'_t)p^{\alpha}(x'_t|\bm{q},\bm{x}_{0:t-1})}\left[1 + \frac{\sum_{x'_t\in\mathcal{V}}\delta(x'_t)}{\sum_{x'_t\in\mathcal{V}}\zeta_t(x'_t)p^{\alpha}(x'_t|\bm{q},\bm{x}_{0:t-1})}\right]^{-1}  \\\nonumber
    &\hspace{8em}=\frac{\zeta_t(x_t)p^{\alpha}(x_t|\bm{q},\bm{x}_{0:t-1}) + \delta(x_t)}{\sum_{x'_t\in\mathcal{V}}\zeta_t(x'_t)p^{\alpha}(x'_t|\bm{q},\bm{x}_{0:t-1})}\times\\\nonumber
    &\hspace{8em}\times\left[1 - \frac{\sum_{x'_t\in\mathcal{V}}\delta(x'_t)}{\sum_{x'_t\in\mathcal{V}}\zeta_t(x'_t)p^{\alpha}(x'_t|\bm{q},\bm{x}_{0:t-1})} +  \left[\frac{\sum_{x'_t\in\mathcal{V}}\delta(x'_t)}{\sum_{x'_t\in\mathcal{V}}\zeta_t(x'_t)p^{\alpha}(x'_t|\bm{q},\bm{x}_{0:t-1})}\right]^2\right] \\\nonumber
    &\hspace{8em}+ \mathcal{O}\left(\left[\sum_{x'_t\in\mathcal{V}}\delta(x'_t)\right]^3\right),
\end{align*}
where in the last step we used the Taylor expansion for $(1 + z)^{-1}$. The above expression can be rewritten as:
\begin{align}\label{interm_res}
    &\hat{p}^{(\text{pow})}_{\alpha} (x_t|\bm{q}, \bm{x}_{0:t-1}) = \frac{\zeta_t(x_t)p^{\alpha}(x_t|\bm{q},\bm{x}_{0:t-1})}{\sum_{x'_t\in\mathcal{V}}\zeta_t(x'_t)p^{\alpha}(x'_t|\bm{q},\bm{x}_{0:t-1})} + \frac{\delta(x_t)}{\sum_{x'_t\in\mathcal{V}}\zeta_t(x'_t)p^{\alpha}(x'_t|\bm{q},\bm{x}_{0:t-1})}\\\nonumber
    &\hspace{8em}-\frac{\zeta_t(x_t)p^{\alpha}(x_t|\bm{q},\bm{x}_{0:t-1})\sum_{x'_t\in\mathcal{V}}\delta(x'_t)}{\left[\sum_{x'_t\in\mathcal{V}}\zeta_t(x'_t)p^{\alpha}(x'_t|\bm{q},\bm{x}_{0:t-1})\right]^2} - \frac{\delta(x_t)\sum_{x'_t\in\mathcal{V}}\delta(x'_t)}{\left[\sum_{x'_t\in\mathcal{V}}\zeta_t(x'_t)p^{\alpha}(x'_t|\bm{q},\bm{x}_{0:t-1})\right]^2} \\\nonumber
    &\hspace{8em}+\frac{\zeta_t(x_t)p^{\alpha}(x_t|\bm{q},\bm{x}_{0:t-1})\left[\sum_{x'_t\in\mathcal{V}}\delta(x'_t)\right]^2}{\left[\sum_{x'_t\in\mathcal{V}}\zeta_t(x'_t)p^{\alpha}(x'_t|\bm{q},\bm{x}_{0:t-1})\right]^3} +\mathcal{O}\left(\left[\sum_{x'_t\in\mathcal{V}}\delta(x'_t)\right]^3\right).
\end{align}
Now, notice:
\begin{align*}
    &\mathbb{E}[\delta(x'_t)] = 0 \ \ \text{ for all} \ \  x'_t\in\mathcal{V},\\\nonumber
    &\mathbb{E}\left[\delta(x_t)\sum_{x'_t\in\mathcal{V}}\delta(x'_t)\right] = \mathbb{E}\left[\delta^2(x_t)\right] = \mathcal{O}\left(\frac{1}{M_t}\right)\ \ \ \text{for all}\ \ x_t\in\mathcal{V},\\\nonumber
    &\mathbb{E}\left[\left[\sum_{x'_t\in\mathcal{V}}\delta(x'_t)\right]^2\right] = \mathbb{E}\left[\sum_{x'_t\in\mathcal{V}}\delta^2(x'_t) + \sum_{x'_t,x''_t: \ x'_t\neq x"_t}\delta(x'_t)\delta(x''_t)\right] = \sum_{x'_t\in\mathcal{V}}\mathbb{E}\left[\delta^2(x'_t)\right] = \mathcal{O}\left(\frac{1}{M_t}\right),\\\nonumber
    &\mathbb{E}\left[\left[\sum_{x'_t\in\mathcal{V}}\delta(x'_t)\right]^3\right] = \mathcal{O}\left(\frac{1}{M_t}\right).
\end{align*}
Taking expectation $\mathbb{E}[\cdot]$ in Equation \ref{interm_res} and using the above facts, we get:
\begin{align*}
    &\mathbb{E}\left[\hat{p}^{(\text{pow})}_{\alpha} (x_t|\bm{q}, \bm{x}_{0:t-1})\right] = \frac{\zeta_t(x_t)p^{\alpha}(x_t|\bm{q},\bm{x}_{0:t-1})}{\sum_{x'_t\in\mathcal{V}}\zeta_t(x'_t)p^{\alpha}(x'_t|\bm{q},\bm{x}_{0:t-1})} + \mathcal{O}\left(\frac{1}{M_t}\right).
\end{align*}
Upon using $p^{(\text{pow})}_{\alpha} (x_t|\bm{q}, \bm{x}_{0:t-1}) = \frac{\zeta_t(x_t)p^{\alpha}(x_t|\bm{q},\bm{x}_{0:t-1})}{\sum_{x'_t\in\mathcal{V}}\zeta_t(x'_t)p^{\alpha}(x'_t|\bm{q},\bm{x}_{0:t-1})}$ in the above result, we eventually get:
\begin{align*}
    &\mathbb{E}\left[\hat{p}^{(\text{pow})}_{\alpha} (x_t|\bm{q}, \bm{x}_{0:t-1})\right] = p^{(\text{pow})}_{\alpha} (x_t|\bm{q}, \bm{x}_{0:t-1}) + \mathcal{O}\left(\frac{1}{M_t}\right),
\end{align*}
which establishes the statement of the lemma.
\end{proof}

\subsection{Proof of Lemma \ref{Lemma:Quad}}
\begin{lemma}
Let $\hat{p}^{(\text{pow})}_{\alpha, \ \text{JK}} (\cdot|\bm{q}, \bm{x}_{0:t-1})$ be the jackknife estimator of the power distribution defined in Equation \ref{JK_MC_estimate}. Then:
\begin{align*}
    &\mathbb{E}\left[\hat{p}^{(\text{pow})}_{\alpha, \ \text{JK}} (\cdot|\bm{q}, \bm{x}_{0:t-1})\right] = p^{(\text{pow})}_{\alpha} (\cdot|\bm{q}, \bm{x}_{0:t-1}) + \mathcal{O}\left(\frac{1}{M^2_t}\right).
\end{align*}
\end{lemma}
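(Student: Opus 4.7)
The plan is to extend the delta-method expansion used in the proof of Lemma~\ref{Lemma_bias} by one further order and then exploit the classical jackknife cancellation. First I would refine the Taylor expansion of Equation~\ref{interm_res}, retaining all terms up to order three in the perturbations $\delta(x_t')$ and pushing the remainder to $\mathcal{O}(\delta^4)$. Since each $\delta(x_t')$ is an empirical mean of $M_t$ i.i.d.\ bounded summands centred on $\zeta_t(x_t')\,p^{\alpha}(x_t'\mid\bm{q},\bm{x}_{0:t-1})$, its central moments satisfy $\mathbb{E}[\delta(x_t')^2]=\Theta(1/M_t)$, $\mathbb{E}[\delta(x_t')^3]=\Theta(1/M_t^2)$, and $\mathbb{E}[\delta(x_t')^4]=\Theta(1/M_t^2)$, with analogous rates for mixed moments and, by independence of rollouts across distinct candidate tokens, vanishing off-diagonal first and third moments. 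Combining these asymptotics with the extended Taylor expansion yields a two-term bias expansion
\begin{equation*}
    \mathbb{E}\bigl[\hat{p}^{(\text{pow})}_{\alpha}(x_t\mid\bm{q},\bm{x}_{0:t-1})\bigr] = p^{(\text{pow})}_{\alpha}(x_t\mid\bm{q},\bm{x}_{0:t-1}) + \frac{a_t(x_t)}{M_t} + \frac{b_t(x_t)}{M_t^2} + \mathcal{O}\!\left(\frac{1}{M_t^3}\right),
\end{equation*}
where the coefficients $a_t(x_t)$ and $b_t(x_t)$ depend only on $p$, $\alpha$, $\bm{q}$, and $\bm{x}_{0:t-1}$, and in particular are independent of $M_t$.

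Next I would apply the identical expansion to each leave-one-out estimator $\hat{p}^{(\text{pow})}_{\alpha,-s}(x_t)$ constructed from $\hat{\zeta}^{(\text{LOO})}_{t,-s}$ in Equation~\ref{LOO_zeta}. Because this estimator uses $M_t-1$ i.i.d.\ rollouts drawn from the same base-model distribution, it satisfies the same expansion with $M_t$ replaced by $M_t-1$:
\begin{equation*}
    \mathbb{E}\bigl[\hat{p}^{(\text{pow})}_{\alpha,-s}(x_t)\bigr] = p^{(\text{pow})}_{\alpha}(x_t\mid\bm{q},\bm{x}_{0:t-1}) + \frac{a_t(x_t)}{M_t-1} + \frac{b_t(x_t)}{(M_t-1)^2} + \mathcal{O}\!\left(\frac{1}{M_t^3}\right).
\end{equation*}
The observation that makes the jackknife work is that $a_t$ and $b_t$ are the \emph{same} constants in both expansions, since they encode population-level quantities rather than sample-level ones; by symmetry every LOO estimator shares this common expectation.

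Substituting both expansions into the jackknife combination of Equation~\ref{JK_MC_estimate} and using $\mathbb{E}\bigl[\tfrac{M_t-1}{M_t}\sum_{s=1}^{M_t}\hat{p}^{(\text{pow})}_{\alpha,-s}\bigr] = (M_t-1)\mathbb{E}[\hat{p}^{(\text{pow})}_{\alpha,-1}]$ yields
\begin{equation*}
    \mathbb{E}\bigl[\hat{p}^{(\text{pow})}_{\alpha,\text{JK}}(x_t\mid\bm{q},\bm{x}_{0:t-1})\bigr] = p^{(\text{pow})}_{\alpha}(x_t\mid\bm{q},\bm{x}_{0:t-1}) + b_t(x_t)\!\left[\frac{1}{M_t}-\frac{1}{M_t-1}\right] + \mathcal{O}\!\left(\frac{1}{M_t^2}\right).
\end{equation*}
The $a_t/M_t$ contribution cancels exactly via the algebraic identity $M_t\cdot(a_t/M_t)-(M_t-1)\cdot(a_t/(M_t-1))=0$, and the residual $b_t\bigl[1/M_t-1/(M_t-1)\bigr]=-b_t/[M_t(M_t-1)]$ is absorbed into the $\mathcal{O}(1/M_t^2)$ remainder, establishing the claim.

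The hard part of the argument is the first step: carrying Equation~\ref{interm_res} one order deeper introduces cross products such as $\delta(x_t)\,\delta(x_t')\,\delta(x_t'')$ whose expectations must be controlled by separating diagonal (same-token) from off-diagonal (distinct-token) contributions, and verifying that independence across candidate rollouts makes the off-diagonal pieces vanish while keeping the diagonal pieces at the predicted $\Theta(1/M_t^2)$ scale. This is what guarantees the clean two-term expansion with coefficients independent of $M_t$, which is exactly the structural property the jackknife identity requires to eliminate the leading-order bias.
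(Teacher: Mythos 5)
Your proposal is correct and follows essentially the same route as the paper's proof: both write the bias of $\hat{p}^{(\text{pow})}_{\alpha}$ as an asymptotic expansion in powers of $1/M_t$ with coefficients that are population quantities independent of $M_t$, observe that each leave-one-out estimator satisfies the identical expansion with $M_t$ replaced by $M_t-1$, and then let the jackknife combination cancel the leading $a_t/M_t$ term algebraically while the residual contributes $\mathcal{O}(1/M_t^2)$. The only cosmetic difference is that the paper writes a formal infinite series $\sum_{i\ge 1}b_i/M_t^i$ and telescopes term by term, whereas you truncate at second order with an explicit $\mathcal{O}(1/M_t^3)$ remainder, which is arguably the cleaner presentation; the cancellation structure and the underlying moment bookkeeping are the same.
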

\begin{proof}
Let us denote for brevity $\alpha^{[r]}(x'_t) = p^{\alpha}(x'_t|\bm{q}, \bm{x}_{0:t-1})p^{\alpha - 1}(\bm{x}^{[r]}_{t+1:T}|\bm{q}, \bm{x}_{0:t-1},x'_t)$ for any $x'_t\in\mathcal{V}$.   From the previous lemma we can write the following result for the Monte-Carlo estimator:
\begin{align*}
    &\mathbb{E}\left[\hat{p}^{(\text{pow})}_{\alpha} (x_t|\bm{q}, \bm{x}_{0:t-1})\right] = \mathbb{E}\left[\frac{\frac{1}{M_t}\sum_{r=1}^{M_t}\alpha^{[r]}(x_t)}{\frac{1}{M_t}\sum_{r=1}^{M_t}\sum_{x'_t\in\mathcal{V}}\alpha^{[r]}(x'_t)}\right] \\\nonumber
    &\hspace{9.75em}= p^{(\text{pow})}_{\alpha} (x_t|\bm{q}, \bm{x}_{0:t-1}) + \sum_{i=1}^{\infty}\frac{b_i}{M^i_t}, 
\end{align*}
for some collection of real numbers $\{b_i\}^{\infty}_{i=1}$. Then, for the jackknife estimator, we can write:
\begin{align*}
    &\mathbb{E}\left[\hat{p}^{(\text{pow})}_{\alpha, \ \text{JK}} (x_t|\bm{q}, \bm{x}_{0:t-1})\right] = \mathbb{E}\left[M_t\frac{\frac{1}{M_t}\sum_{r=1}^{M_t}\alpha^{[r]}(x_t)}{\frac{1}{M_t}\sum_{r=1}^{M_t}\sum_{x'_t\in\mathcal{V}}\alpha^{[r]}(x'_t)} - \frac{M_t-1}{M_t}\sum_{s=1}^{M_t}\frac{\frac{1}{M_t-1}\sum_{r=1,r\neq s}^{M_t}\alpha^{[r]}(x_t)}{\frac{1}{M_t-1}\sum_{r=1,r\neq s}^{M_t}\sum_{x'_t\in\mathcal{V}}\alpha^{[r]}(x'_t)}\right]\\\nonumber
    &\hspace{9.75em}=M_t\mathbb{E}\left[\frac{\frac{1}{M_t}\sum_{r=1}^{M_t}\alpha^{[r]}(x_t)}{\frac{1}{M_t}\sum_{r=1}^{M_t}\sum_{x'_t\in\mathcal{V}}\alpha^{[r]}(x'_t)}\right] - \frac{M_t-1}{M_t}\sum_{s=1}^{M_t}\mathbb{E}\left[\frac{\frac{1}{M_t-1}\sum_{r=1,r\neq s}^{M_t}\alpha^{[r]}(x_t)}{\frac{1}{M_t-1}\sum_{r=1,r\neq s}^{M_t}\sum_{x'_t\in\mathcal{V}}\alpha^{[r]}(x'_t)}\right] \\\nonumber
    &\hspace{9.75em}=M_t\left[p^{(\text{pow})}_{\alpha} (x_t|\bm{q}, \bm{x}_{0:t-1}) + \sum_{i=1}^{\infty}\frac{b_i}{M^i_t}\right] - \frac{M_t-1}{M_t}\sum_{s=1}^{M_t}\left[p^{(\text{pow})}_{\alpha} (x_t|\bm{q}, \bm{x}_{0:t-1}) + \sum_{i=1}^{\infty}\frac{b_i}{(M_t-1)^i}\right] \\\nonumber
    &\hspace{9.75em}=M_tp^{(\text{pow})}_{\alpha} (x_t|\bm{q}, \bm{x}_{0:t-1}) + \sum_{i=1}^{\infty}\frac{b_i}{M^{i-1}_t} - (M_t-1)p^{(\text{pow})}_{\alpha} (x_t|\bm{q}, \bm{x}_{0:t-1})  \\\nonumber
    &\hspace{9.75em}-(M_t-1)\sum_{i=1}^{\infty}\frac{b_i}{(M_t-1)^{i}} = p^{(\text{pow})}_{\alpha} (x_t|\bm{q}, \bm{x}_{0:t-1}) + \sum_{i=1}^{\infty}\left[\frac{b_i}{M^{i-1}_t} - \frac{b_i}{(M_t-1)^{i-1}}\right]\\\nonumber
    &\hspace{9.75em} = p^{(\text{pow})}_{\alpha} (x_t|\bm{q}, \bm{x}_{0:t-1}) + \sum_{i=2}^{\infty}b_i\frac{(M_t-1)^{i-1}-M^{i-1}_t}{M^{i-1}_t(M_t-1)^{i-1}} = p^{(\text{pow})}_{\alpha} (x_t|\bm{q}, \bm{x}_{0:t-1}) + \sum_{i=2}^{\infty}b_ic_i.
\end{align*}
where $c_i = \frac{(M_t-1)^{i-1}-M^{i-1}_t}{M^{i-1}_t(M_t-1)^{i-1}} = \mathcal{O}\left(\frac{1}{M^i_t}\right)$. Hence, we have
\begin{align*}
    &\mathbb{E}\left[\hat{p}^{(\text{pow})}_{\alpha, \ \text{JK}} (x_t|\bm{q}, \bm{x}_{0:t-1})\right] = p^{(\text{pow})}_{\alpha} (x_t|\bm{q}, \bm{x}_{0:t-1}) + \mathcal{O}\left(\frac{1}{M^2_t}\right).
\end{align*}
\end{proof}

\subsection{High-Probability Convergence Guarantees}\label{App:HighProb}
\begin{corollary}
    Let $\epsilon\in(0,1)$ be any arbitrary precision parameter, $\delta\in(0,1)$ be arbitrary confidence parameter, and let sample complexity $M_t$ for computing the scaling factors $\zeta_t(x'_t)$ for any $x_t\in\mathcal{V}$ be equal to $M_t = \mathcal{O}\left(\frac{1}{\epsilon^2}\log\left(\frac{1}{\delta}\right)\right)$. Then, for any fixed token $x_t\in\mathcal{V}$ we have:
    \begin{align*}
        &\mathbb{P}\left[|\hat{p}^{(\text{pow})}_{\alpha, \ \text{JK}} (x_t|\bm{q}, \bm{x}_{0:t-1}) - p^{(\text{pow})}_{\alpha} (x_t|\bm{q}, \bm{x}_{0:t-1})| \le \epsilon\right] \ge 1 - \delta.
    \end{align*}
\end{corollary}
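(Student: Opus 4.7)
The plan is to decompose the total deviation $|\hat{p}^{(\text{pow})}_{\alpha,\text{JK}}(x_t\mid\bm{q},\bm{x}_{0:t-1}) - p^{(\text{pow})}_{\alpha}(x_t\mid\bm{q},\bm{x}_{0:t-1})|$ into a deterministic bias term and a stochastic fluctuation term, and to control them separately. Lemma~\ref{Lemma:Quad} already supplies a bias bound of order $1/M^2_t$, which under the target sample complexity is much smaller than $\epsilon$, so the bulk of the work is to establish exponential concentration of $\hat{p}^{(\text{pow})}_{\alpha,\text{JK}}$ around its expectation.

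First, I would concentrate the Monte Carlo scaling factors via Hoeffding's inequality. For each $x'_t\in\mathcal{V}$, the estimator $\hat{\zeta}_t(x'_t)$ in Eq.~\ref{zeta} is an empirical average of $M_t$ i.i.d.\ samples of $p^{\alpha-1}(\bm{x}^{[r]}_{t+1:T}\mid\bm{q},\bm{x}_{0:t-1},x'_t)\in[0,1]$ (which holds since $\alpha\ge 1$). Hoeffding gives
\[
    \mathbb{P}\bigl[|\hat{\zeta}_t(x'_t) - \zeta_t(x'_t)|\ge\epsilon'\bigr] \le 2\exp\bigl(-2M_t{\epsilon'}^2\bigr),
\]
and a union bound over $x'_t\in\mathcal{V}$ yields an event $\mathcal{E}$ of probability $\ge 1-\delta$ on which every scaling factor is within $\epsilon'=\mathcal{O}\bigl(\sqrt{\log(|\mathcal{V}|/\delta)/M_t}\bigr)$ of its true value. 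Since $\hat\zeta^{(\text{LOO})}_{t,-s}(x'_t)-\hat{\zeta}_t(x'_t)=\mathcal{O}(1/M_t)$ deterministically, the leave-one-out variants enjoy the same uniform bound on $\mathcal{E}$ without a further union bound.

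Next, I would transfer this concentration to the jackknife estimator through the smooth structure of Eq.~\ref{Eq:PowerApprox}. Writing $\phi(\zeta)=p^\alpha(x_t\mid\cdot)\zeta(x_t)\big/\sum_{x'_t}p^\alpha(x'_t\mid\cdot)\zeta(x'_t)$, Theorem~\ref{Main_theorem} gives $\phi(\zeta_t)=p^{(\text{pow})}_\alpha(x_t\mid\cdot)$. Under the mild assumption that $\min_{x'_t}\zeta_t(x'_t)>0$, $\phi$ is Lipschitz with some constant $L$ in a neighbourhood of $\zeta_t$, so on $\mathcal{E}$ one obtains $|\hat{p}^{(\text{pow})}_\alpha - p^{(\text{pow})}_\alpha|\le L\epsilon'$ and $|\hat{p}^{(\text{pow})}_{\alpha,-s} - p^{(\text{pow})}_\alpha|\le L\epsilon'$ for every $s$. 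A naive triangle inequality applied to Eq.~\ref{JK_MC_estimate} would only give $\mathcal{O}(M_tL\epsilon')$, which is far too loose.

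The main obstacle is harvesting the cancellation engineered into the jackknife construction. I would rewrite $\hat{p}^{(\text{pow})}_{\alpha,\text{JK}} = \hat{p}^{(\text{pow})}_\alpha + (M_t-1)\bigl(\hat{p}^{(\text{pow})}_\alpha - \bar{p}\bigr)$ with $\bar{p}=\tfrac{1}{M_t}\sum_s\hat{p}^{(\text{pow})}_{\alpha,-s}$ and Taylor-expand each $\phi(\hat\zeta^{(-s)})$ about $\hat\zeta$. Using $\hat\zeta^{(-s)}=\hat\zeta+(\hat\zeta-X_s)/(M_t-1)$, the identity $\sum_s(\hat\zeta^{(-s)}-\hat\zeta)=0$ annihilates the first-order term, while $(\hat\zeta^{(-s)}-\hat\zeta)^2=\mathcal{O}(1/M^2_t)$ leaves a quadratic remainder of order $1/M^2_t$. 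Hence $\hat{p}^{(\text{pow})}_{\alpha,\text{JK}}-\hat{p}^{(\text{pow})}_\alpha=\mathcal{O}(1/M_t)$ pointwise on $\mathcal{E}$, provided $\phi$ has bounded Hessian (again controlled by $\min_{x'_t}\zeta_t$). Combining with $|\hat{p}^{(\text{pow})}_\alpha-p^{(\text{pow})}_\alpha|=\mathcal{O}(\sqrt{\log(1/\delta)/M_t})$, the total deviation is of order $\sqrt{\log(1/\delta)/M_t}$, and setting $M_t=\mathcal{O}(\log(1/\delta)/\epsilon^2)$ yields the claim.
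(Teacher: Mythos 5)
Your proposal is correct and takes a genuinely different route from the paper's. The paper also begins with Chernoff/Hoeffding bounds on the full-sample means $\bar X_t,\bar X$ and their leave-one-out variants, and a union bound; from there, however, it plugs the resulting confidence intervals term-by-term into the jackknife combination $M_t \tfrac{\bar X_t}{\bar X}-\tfrac{M_t-1}{M_t}\sum_s\tfrac{\bar X_{t\setminus s}}{\bar X_{\setminus s}}$, which produces a slack of order $M_t\,\eta$ that is then absorbed into a displayed bound of the form $\eta|\mathcal V|^T\mathcal{O}(1)$. That interval-arithmetic step treats $\hat p^{(\mathrm{pow})}_\alpha$ and the leave-one-out ratios $\hat p^{(\mathrm{pow})}_{\alpha,-s}$ as if they could be at opposite ends of their confidence intervals, ignoring the near-perfect correlation that the jackknife is designed to exploit, and the $M_t$ factor disappears without clear justification. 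You instead make the cancellation explicit: rewriting $\hat p^{(\mathrm{pow})}_{\alpha,\mathrm{JK}}=\hat p^{(\mathrm{pow})}_\alpha+(M_t-1)(\hat p^{(\mathrm{pow})}_\alpha-\bar p)$, Taylor-expanding each $\phi(\hat\zeta^{(-s)})$ around $\hat\zeta$, and using $\sum_s(\hat\zeta^{(-s)}-\hat\zeta)=0$ together with $\hat\zeta^{(-s)}-\hat\zeta=(\hat\zeta-X_s)/(M_t-1)=\mathcal O(1/M_t)$, you control the jackknife correction by $\mathcal O(1/M_t)$ directly, after which the Hoeffding term $\mathcal O(\sqrt{\log(1/\delta)/M_t})$ dominates and gives $M_t=\mathcal O(\epsilon^{-2}\log(1/\delta))$. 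The two proofs require the same regularity (a lower bound on the denominator $\sum_{x'_t}\zeta_t(x'_t)p^\alpha(x'_t\mid\cdot)$, which you phrase via boundedness of $\nabla\phi$ and $\nabla^2\phi$ and the paper encodes in the $|\mathcal V|^{T}$ factors), but your decomposition into ``bias $+$ correlated fluctuation'' handles the jackknife structure more transparently and avoids the loose term-by-term interval bound.
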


\begin{proof}
    Let us introduce the following notation for brevity:
    \begin{align*}
        &\bar{X}_t = \frac{1}{M_t}\sum_{r=1}^{M_t}p^{\alpha-1}(\bm{x}^{[r]}_{t+1:T}|\bm{q}, \bm{x}_{0:t-1}, x_t), \ \ \ \ \ \mathbb{E}[\bar{X}_t] = \mu_t\ \ \ \text{with} \ \ \ \{\bm{x}^{[r]}_{t+1:T}\}\sim p(\cdot|\bm{q}, \bm{x}_{0:t-1}, x_t),\\\nonumber
        &\bar{X} = \frac{1}{M_t}\sum_{r=1}^{M_t}\sum_{x_t'\in\mathcal{V}}p^{\alpha-1}(\bm{x}^{'[r]}_{t+1:T}|\bm{q}, \bm{x}_{0:t-1}, x'_t), \ \ \ \ \ \mathbb{E}[\bar{X}] = \mu\ \ \ \text{with} \ \ \ \{\bm{x}^{'[r]}_{t+1:T}\}\sim p(\cdot|\bm{q}, \bm{x}_{0:t-1}, x'_t),\\\nonumber
        &\bar{X}_{t\setminus s} = \frac{1}{M_t-1}\sum_{r=1, r\neq s}^{M_t}p^{\alpha-1}(\bm{x}^{[r]}_{t+1:T}|\bm{q}, \bm{x}_{0:t-1}, x_t), \ \ \ \ \ \mathbb{E}[\bar{X}_{t\setminus s}] = \mu_t\ \ \ \text{with} \ \ \ \{\bm{x}^{[r]}_{t+1:T}\}\sim p(\cdot|\bm{q}, \bm{x}_{0:t-1}, x_t),\\\nonumber
        &\bar{X}_{\setminus s} = \frac{1}{M_t-1}\sum_{r=1,r\neq s}^{M_t}\sum_{x_t'\in\mathcal{V}}p^{\alpha-1}(\bm{x}^{'[r]}_{t+1:T}|\bm{q}, \bm{x}_{0:t-1}, x'_t), \ \ \ \ \ \mathbb{E}[\bar{X}_{\setminus s}] = \mu\ \ \ \text{with} \ \ \ \{\bm{x}^{'[r]}_{t+1:T}\}\sim p(\cdot|\bm{q}, \bm{x}_{0:t-1}, x'_t).\\\nonumber
    \end{align*}
    In these notations, we have
    \begin{align*}
        &\hat{p}^{(\text{pow})}_{\alpha, \ \text{JK}} (x_t|\bm{q}, \bm{x}_{0:t-1}) = M_t\frac{\bar{X}_t}{\bar{X}} - \frac{M_t-1}{M_t}\sum_{s=1}^{M_t}\frac{\bar{X}_{t\setminus s}}{\bar{X}_{\setminus s}},\\\nonumber
        &p^{(\text{pow})}_{\alpha} (x_t|\bm{q}, \bm{x}_{0:t-1}) = \frac{\mu_t}{\mu}.
    \end{align*}
Following a Chernoff bound, for any $\eta \in (0,1)$ we have:
    \begin{align*}
        &\mathbb{P}\left[|\bar{X}_t - \mu_t| \ge \eta\right] \le 2e^{-2\eta^2M_t},\\\nonumber
        &\mathbb{P}\left[|\bar{X} - \mu| \ge \eta\right] \le 2e^{-\frac{2\eta^2M_t}{|\mathcal{V}|^2}},\\\nonumber
        &\mathbb{P}\left[|\bar{X}_{t\setminus s} - \mu_t| \ge \eta\right] \le 2e^{-2\eta^2(M_t-1)},\\\nonumber
        &\mathbb{P}\left[|\bar{X}_{\setminus s} - \mu| \ge \eta\right] \le 2e^{-\frac{2\eta^2(M_t-1)}{|\mathcal{V}|^2}}.\\\nonumber
    \end{align*}
From this, we have:
\begin{align*}
    &0< \mu_t - \eta \le \bar{X}_t \le \mu_t + \eta \ \ \ \text{w.p. at least } \ \ 1 - 2e^{-2\eta^2M_t},\\\nonumber
    &0< \mu - \eta \le \bar{X} \le \mu + \eta \ \ \ \text{w.p. at least } \ \ 1 - 2e^{\frac{-2\eta^2M_t}{|\mathcal{V}|^2}}\\\nonumber
    &0< \mu_t - \eta \le \bar{X}_{t\setminus s} \le \mu_t + \eta \ \ \ \text{w.p. at least } \ \ 1 - 2e^{-2\eta^2(M_t-1)},\\\nonumber
    &0< \mu - \eta \le \bar{X}_{\setminus s} \le \mu + \eta \ \ \ \text{w.p. at least } \ \ 1 - 2e^{\frac{-2\eta^2(M_t-1)}{|\mathcal{V}|^2}}.
\end{align*}
Then, we have:
\begin{align*}
    &0< \frac{\mu_t - \eta}{\mu + \eta} \le \frac{\bar{X}_t}{\bar{X}} \le \frac{\mu_t + \eta}{\mu - \eta} \ \ \ \text{w.p. at least } \ \ 1 - 2e^{-2\eta^2M_t} -2e^{\frac{-2\eta^2M_t}{|\mathcal{V}|^2}} ,\\\nonumber
    &0< \frac{\mu_t - \eta}{\mu + \eta} \le \frac{\bar{X}_{t\setminus s}}{\bar{X}_{\setminus s}} \le \frac{\mu_t + \eta}{\mu - \eta} \ \ \ \text{w.p. at least} \ \ 1 - 2e^{-2\eta^2(M_t-1)} -2e^{\frac{-2\eta^2(M_t-1)}{|\mathcal{V}|^2}}.
\end{align*}
From the second result, we see that:
\begin{align*}
    &M_t\frac{\mu_t - \eta}{\mu + \eta} \le M_t\frac{\bar{X}_t}{\bar{X}} \le M_t\frac{\mu_t + \eta}{\mu - \eta}\ \ \text{w.p at least} \ \ 1 - 2\left[e^{-2\eta^2M_t} +e^{\frac{-2\eta^2M_t}{|\mathcal{V}|^2}}\right],\\\nonumber 
    &\sum_{s=1}^{M_t}\frac{\bar{X}_{t\setminus s}}{\bar{X}_{\setminus s}} \in \left[M_t\frac{\mu_t - \eta}{\mu + \eta}, M_t\frac{\mu_t + \eta}{\mu - \eta}\right] \ \ \text{w.p at least} \ \ 1 - 2M_t\left[e^{-2\eta^2(M_t-1)} +e^{\frac{-2\eta^2(M_t-1)}{|\mathcal{V}|^2}}\right].
\end{align*}
Combining these two results, gives us:
\begin{align}\label{bound_1}
    &M_t\frac{\bar{X}_t}{\bar{X}} - \frac{M_t-1}{M_t}\sum_{s=1}^{M_t}\frac{\bar{X}_{t\setminus s}}{\bar{X}_{\setminus s}} \in \left[M_t\frac{\mu_t - \eta}{\mu + \eta} - (M_t-1)\left[\frac{\mu_t + \eta}{\mu - \eta}\right], M_t\frac{\mu_t + \eta}{\mu - \eta} - (M_t-1)\left[\frac{\mu_t - \eta}{\mu + \eta}\right]\right],\\\nonumber
    &\text{w.p. at least} \ \ 1 - 2M_t\left[e^{-2\eta^2(M_t-1)} +e^{\frac{-2\eta^2(M_t-1)}{|\mathcal{V}|^2}}\right] - 2\left[e^{-2\eta^2M_t} +e^{\frac{-2\eta^2M_t}{|\mathcal{V}|^2}}\right].
\end{align}
Let us study the boundaries in the above result. To do this, notice:
\begin{align*}
    &\frac{\mu_t + \eta}{\mu - \eta} = \frac{\mu_t}{\mu} + \eta\frac{\mu + \mu_t}{\mu(\mu - \eta)}, \ \ \ \ \ \frac{\mu_t - \eta}{\mu + \eta} = \frac{\mu_t}{\mu} -\eta\frac{\mu + \mu_t}{\mu(\mu + \eta)}.
\end{align*}
We start with the right hand side in Equation \ref{bound_1}:
\begin{align}
    &M_t\frac{\mu_t + \eta}{\mu - \eta} - (M_t-1)\left[\frac{\mu_t - \eta}{\mu + \eta}\right] =M_t\left[\frac{\mu_t}{\mu} + \eta\frac{\mu + \mu_t}{\mu(\mu - \eta)}\right] - (M_t-1)\left[\frac{\mu_t}{\mu} -\eta\frac{\mu + \mu_t}{\mu(\mu + \eta)}\right] \\\nonumber
    &\hspace{13.5em}=\frac{\mu_t}{\mu} + M_t\eta\frac{\mu + \mu_t}{\mu(\mu - \eta)}+(M_t-1)\eta\frac{\mu + \mu_t}{\mu(\mu + \eta)} \\\nonumber
    &\hspace{13.5em}\le \frac{\mu_t}{\mu} + \eta\frac{\mu + \mu_t}{\mu(\mu - \eta)}2M_t+\eta\frac{\mu + \mu_t}{\mu(\mu + \eta)}2M_t \\\nonumber
    &\hspace{13.5em}=\frac{\mu_t}{\mu} + \eta\frac{4M_t(\mu + \mu_t)}{\mu^2 - \eta^2}.
\end{align}
Similarly, for the left hand side in Equation \ref{bound_1}:
\begin{align*}
    &M_t\frac{\mu_t - \eta}{\mu + \eta} - (M_t-1)\left[\frac{\mu_t + \eta}{\mu - \eta}\right] =  M_t\left[\frac{\mu_t}{\mu} -\eta\frac{\mu + \mu_t}{\mu(\mu + \eta)} \right] - (M_t-1)\left[\frac{\mu_t}{\mu} + \eta\frac{\mu + \mu_t}{\mu(\mu - \eta)}\right] \\\nonumber
    &\hspace{13.5em}=\frac{\mu_t}{\mu} - M_t\eta\frac{\mu + \mu_t}{\mu(\mu + \eta)} - (M_t-1)\eta\frac{\mu + \mu_t}{\mu(\mu - \eta)} \\\nonumber
    &\hspace{13.5em}\ge \frac{\mu_t}{\mu} - \eta\frac{\mu + \mu_t}{\mu(\mu + \eta)}2M_t - \eta\frac{\mu + \mu_t}{\mu(\mu - \eta)}2M_t \\\nonumber
    &\hspace{13.5em}=\frac{\mu_t}{\mu} - \eta\frac{4M_t(\mu + \mu_t)}{\mu^2 - \eta^2}.
\end{align*}
Notice, that $\eta\frac{4M_t(\mu + \mu_t)}{\mu^2 - \eta^2} \le \eta|\mathcal{V}|^{T}\mathcal{O}(1)$
Hence, denoting $c = \eta|\mathcal{V}|^{T}\mathcal{O}(1)$, using $\hat{p}^{(\text{pow})}_{\alpha, \ \text{JK}} (x_t|\bm{q}, \bm{x}_{0:t-1}) = M_t\frac{\bar{X}_t}{\bar{X}} - \frac{M_t-1}{M_t}\sum_{s=1}^{M_t}\frac{\bar{X}_{t\setminus s}}{\bar{X}_{\setminus s}}$ and $p^{(\text{pow})}_{\alpha} (x_t|\bm{q}, \bm{x}_{0:t-1}) = \frac{\mu_t}{\mu}$  we have:
\begin{align*}
    &\hat{p}^{(\text{pow})}_{\alpha, \ \text{JK}} (x_t|\bm{q}, \bm{x}_{0:t-1})\in \left[p^{(\text{pow})}_{\alpha} (x_t|\bm{q}, \bm{x}_{0:t-1}) - c, p^{(\text{pow})}_{\alpha} (x_t|\bm{q}, \bm{x}_{0:t-1}) + c\right],\\\nonumber
    &\text{w.p. at least}\\\nonumber
    &1 - 2M_t\left[e^{-2\eta^2(M_t-1)} +e^{\frac{-2\eta^2(M_t-1)}{|\mathcal{V}|^2}}\right] - 2\left[e^{-2\eta^2M_t} +e^{\frac{-2\eta^2M_t}{|\mathcal{V}|^2}}\right].
\end{align*}
Or, equivalently:
\begin{align*}
    &|\hat{p}^{(\text{pow})}_{\alpha, \ \text{JK}} (x_t|\bm{q}, \bm{x}_{0:t-1}) - p^{(\text{pow})}_{\alpha} (x_t|\bm{q}, \bm{x}_{0:t-1})| \le c,\\\nonumber
    &\text{w.p. at least}\\\nonumber
    &1 - 2M_t\left[e^{-2\eta^2(M_t-1)} +e^{\frac{-2\eta^2(M_t-1)}{|\mathcal{V}|^2}}\right] - 2\left[e^{-2\eta^2M_t} +e^{\frac{-2\eta^2M_t}{|\mathcal{V}|^2}}\right].
\end{align*}
Now, let us simplify the confidence probabilities:
\begin{align*}
    &1 - 2M_t\left[e^{-2\eta^2(M_t-1)} +e^{\frac{-2\eta^2(M_t-1)}{|\mathcal{V}|^2}}\right] - 2\left[e^{-2\eta^2M_t} +e^{\frac{-2\eta^2M_t}{|\mathcal{V}|^2}}\right] \ge \\\nonumber
    &1 - 2M_t\left[e^{-2\eta^2(M_t-1)} +e^{\frac{-2\eta^2(M_t-1)}{|\mathcal{V}|^2}}\right] - 2M_t\left[e^{-2\eta^2M_t} +e^{\frac{-2\eta^2M_t}{|\mathcal{V}|^2}}\right] \ge \\\nonumber
    &1 - 4M_t\left[e^{-2\eta^2(M_t-1)} +e^{\frac{-2\eta^2(M_t-1)}{|\mathcal{V}|^2}}\right] \ge 1 - 8M_te^{\frac{-2\eta^2(M_t-1)}{|\mathcal{V}|^2}}.
\end{align*}
So, in other words,  we have:
\begin{align*}
    &\mathbb{P}\left[|\hat{p}^{(\text{pow})}_{\alpha, \ \text{JK}} (x_t|\bm{q}, \bm{x}_{0:t-1}) - p^{(\text{pow})}_{\alpha} (x_t|\bm{q}, \bm{x}_{0:t-1})| \le c\right] \ge 1 - 8M_te^{\frac{-2\eta^2(M_t-1)}{|\mathcal{V}|^2}}.
\end{align*}
Let $\epsilon = \eta|\mathcal{V}|^{T}\mathcal{O}(1)$, then $\eta = \frac{\epsilon}{|\mathcal{V}|^{T}}\mathcal{O}(1)$, then:
\begin{align*}
    \mathbb{P}\left[|\hat{p}^{(\text{pow})}_{\alpha, \ \text{JK}} (x_t|\bm{q}, \bm{x}_{0:t-1}) - p^{(\text{pow})}_{\alpha} (x_t|\bm{q}, \bm{x}_{0:t-1})| \le \epsilon\right] \ge 1 - 8M_te^{\frac{-\mathcal{O}(1)\epsilon^2(M_t-1)}{|\mathcal{V}|^{2T + 2}}}.
\end{align*}
We need the confidence bound to be at least $1 - \delta$ for a given $\delta$, hence:
\begin{align*}
    1 - 8M_te^{\frac{-\mathcal{O}(1)\epsilon^2(M_t-1)}{|\mathcal{V}|^{2T + 2}}} \ge 1 - \delta \ \ \ &\Longrightarrow 8M_te^{\frac{-\mathcal{O}(1)\epsilon^2(M_t-1)}{|\mathcal{V}|^{2T + 2}}} \le \delta\ \ \ \Longrightarrow e^{\frac{-\mathcal{O}(1)\epsilon^2(M_t-1)}{|\mathcal{V}|^{2T + 2}}} \le \frac{\delta}{8|\mathcal{V}|^T}\ \\\nonumber
    &\Longrightarrow\frac{\mathcal{O}(1)\epsilon^2(M_t-1)}{|\mathcal{V}|^{2T + 2}} \ge \log\left(\frac{|\mathcal{V}|^{T}}{\delta}\right)\ \ \ \Longrightarrow M_t \ge \mathcal{O}\left(\frac{1}{\epsilon^2}\log\left(\frac{1}{\delta}\right)\right).
\end{align*}
This finishes the proof of the claim.
\end{proof}

\section{Batch Tokens Extension}\label{App:Algo_B}

\begin{algorithm}[tb]
   \caption{Batch Version of Scalable Power Sampling for a Query $\bm{q}$}
   \label{Batch_Approx_alg}
\begin{algorithmic}[1]
   \STATE {\bfseries Inputs:} Power distribution exponent $\alpha\in [1, \infty)$, upper bounds on trajectory length $T$, size of a batch $B$, size of groups $\{K_t\}^{\lfloor\frac{T}{B}\rfloor+1}_{t=1}$, size of top evaluation budgets $\{L_t\}^{\lfloor\frac{T}{B}\rfloor + 1}_{t=1}$ , horizon lengths $\{H_t\}^{\lfloor\frac{T}{B}\rfloor }_{t=1}$, and the total Monte Carlo budgets $\{M_t\}^{\lfloor\frac{T}{B}\rfloor }_{t=1}$
    \STATE \text{Let } \ $q = T\mod{B}$ and $w = \lfloor\frac{T}{B}\rfloor+1$
   \FOR{$t=1$ $\rightarrow$ $\lfloor\frac{T}{B}\rfloor$}
   \STATE \text{Sample } \ $L_t$ blocks of size $B$: \ \ $\mathcal{L}_t = \{\{ \bm{x}^{[r]}_{(t-1)B:tB-1}\}^{L_t}_{r=1} \ \ \text{such that } \ \ \bm{x}^{[r]}_{(t-1)B:tB-1}\sim p(\cdot|\bm{q},\bm{x}_{0:(t-1)B-1})\}$
   \STATE \text{Compute} \ $\mathcal{G}_t(\bm{x}_{0:(t-1)B-1}) = \text{Top}@K_t\left[\text{in}\ \ \mathcal{L}_t \ \text{with respect to } \ p(\cdot|\bm{q},\bm{x}_{0:(t-1)B-1})\right]$
   \FOR {$\bm{x}^{\prime}_{(t-1)B: tB-1} \in \mathcal{G}_t(\bm{x}_{0:(t-1)B-1})$}
   \STATE Sample $M_t$ i.i.d. completions of length $H_t$: 
   \begin{align*}
       \{\{\bm{x}^{[r]}_{tB: tB + H_t-1}\}^{M_t}_{r=1} \ \text{such that } \ \ \bm{x}^{[r]}_{tB: tB + H_t-1}\sim p(\cdot|\bm{q}, \bm{x}_{0:(t-1)B-1}, \bm{x}^{\prime}_{(t-1)B: tB-1})\}
   \end{align*}
   \STATE Compute $\hat{\zeta}_t(\bm{x}^{\prime}_{(t-1)B: tB-1})$ and $\hat{\zeta}^{(\text{LOO})}_{t, -s}(\bm{x}^{\prime}_{(t-1)B: tB-1})$ from Eq. \ref{zeta_batch}  and \ref{loo_zeta_batch}
   \ENDFOR
   \FOR {$\bm{x}^{\prime}_{(t-1)B: tB-1} \in \mathcal{G}_t(\bm{x}_{0:(t-1)B-1})$}
   \FOR{$s\in \{1,\ldots,M_t\}$}
   \STATE Compute $\hat{p}^{(\text{pow})}_{\alpha}(\bm{x}^{\prime}_{(t-1)B: tB-1})$ and  $\hat{p}^{(\text{pow})}_{\alpha, -s}(\bm{x}^{\prime}_{(t-1)B: tB-1})$ using Equations \ref{batch_sample}  and \ref{batch_leave_one_out} 
   \ENDFOR
   \ENDFOR
   \STATE Compute the Jackknife estimator using Equation \ref{batch_jackknife} 
   \STATE \text{Sample $\bm{x}_{(t-1)B: tB-1}\in \mathcal{G}_t(\bm{x}_{0:(t-1)B-1})$ w.p. $\hat{p}^{(\text{pow})}_{\alpha, \ \text{JK}} (\bm{x}_{(t-1)B:tB-1}|\bm{q}, \bm{x}_{0:(t-1)B-1})$}
   \STATE \text{Concatenate the sampled tokens  $\bm{x}_{0:tB-1} = \bm{x}_{0:(t-1)B-1} \oplus \bm{x}_{(t-1)B:tB-1}$}
   \ENDFOR
   \STATE \text{Sample $L_{w}$ blocks of size $q+1$: } $\mathcal{L}_{w} = \{\{ \bm{x}^{[r]}_{\lfloor\frac{T}{B}\rfloor B:T}\}^{L_{w}}_{r=1} \ \ \text{such that } \ \ \bm{x}^{[r]}_{\lfloor\frac{T}{B}\rfloor B:T}\sim p(\cdot|\bm{q},\bm{x}_{0:B\lfloor\frac{T}{B}\rfloor - 1})\}$
   \STATE \text{Compute} \ $\mathcal{G}_{w}(\bm{x}_{0:B\lfloor\frac{T}{B}\rfloor - 1}) = \text{Top}@K_w\left[\text{in}\ \ \mathcal{L}_w \ \text{with respect to } \ p(\cdot|\bm{q},\bm{x}_{0:B\lfloor\frac{T}{B}\rfloor - 1})\right]$
   \STATE \text{Sample $\bm{x}_{\lfloor\frac{T}{B}\rfloor B:T}\in \mathcal{G}_w(\bm{x}_{0:B\lfloor\frac{T}{B}\rfloor - 1})$ w. p. $\hat{p}^{(\text{low.temp})}_{\alpha} (\bm{x}_{\lfloor\frac{T}{B}\rfloor B:T}|\bm{q}, \bm{x}_{0:B\lfloor\frac{T}{B}\rfloor - 1})$ given in Equation \ref{batch_low_temp}}
   \STATE \text{Concatenate the sampled token  $\bm{x}_{0:T} = \bm{x}_{0:B\lfloor\frac{T}{B}\rfloor - 1} \oplus \bm{x}_{\lfloor\frac{T}{B}\rfloor B:T}$}
   \STATE \textbf{Return } Sampled trajectory $\{x_0,\ldots, x_T\} = \bm{x}_{0:T}$
\end{algorithmic}
\end{algorithm}
In Algorithm \ref{Approx_alg}, we provide a batched version of the algorithm presented in the main paper. Next, we provide a detailed description of the main steps in this algorithm and further discuss its advantages. To proceed in batches of size $B$ over trajectories with length $L$, we first identify the total number of iterations in the main for loop as $\lfloor\frac{T}{B}\rfloor$, and the remaining part of the trajectory is of size $q = T\mod{B}$. Next, in each iteration $t$, based on the trajectory generated so far $\bm{x}_{0:(t-1)B-1}$\footnote{When $t=1$ we have $(t-1)B-1 = -1$ and we use the notation  $\bm{x}_{0:-1}  =  \{\}$}, we aim to sample the next chunk $\bm{x}_{(t-1)B:tB-1}$ of $B$ tokens from our approximation of the power distribution. To achieve this, we first generate a collection of $L_t$ candidates for the next chunk $\bm{x}_{(t-1)B:tB-1}$. We call this collection $\mathcal{L}_t$ (see line 4) and its elements are sampled from the base model, i.e., $\bm{x}^{[r]}_{(t-1)B:tB-1}\sim p(\cdot|\bm{q},\bm{x}_{0:(t-1)B-1})$ for $r=1,\ldots L_t$. Next, from the collection $\mathcal{L}_t$ we pick $K_t$, the most promising candidates, based on their likelihood provided by the base model. In line 5, these most promising candidates are aggregated in the set $\mathcal{G}_t(\bm{x}_{0:(t-1)B-1})$. In lines 6-9 we start the first inner \textbf{for loop} that aims to compute the following  scaling factors for each candidate chunk $\bm{x}^{\prime}_{(t-1)B: tB-1} \in \mathcal{G}_t(\bm{x}_{0:(t-1)B-1})$:
\begin{align}\label{zeta_batch}
    &\hat{\zeta}_t(\bm{x}^{\prime}_{(t-1)B: tB-1}) = \frac{1}{M_t} \sum_{r=1}^{M_t} p^{\alpha -1}(\bm{x}^{[r]}_{tB: tB + H_t-1}|\bm{q},\bm{x}_{0:(t-1)B-1}, \bm{x}^{\prime}_{(t-1)B: tB-1})
    \end{align}
    
    \begin{align}\label{loo_zeta_batch}&\hat{\zeta}^{(\text{LOO})}_{t, -s}(\bm{x}^{\prime}_{(t-1)B: tB-1}) = \frac{1}{M_t -1}\sum_{r=1,r\neq s}^{M_t}p^{\alpha-1}(\bm{x}^{[r]}_{tB: tB + H_t-1}|\bm{q},\bm{x}_{0:(t-1)B-1}, \bm{x}^{\prime}_{(t-1)B: tB-1}),
\end{align}
where each roll-out trajectory  $\bm{x}^{[r]}_{tB: tB + H_t-1}$ has length $H_t$ tokens and is sampled from the base model (line 7).  In lines 10-14 we start the second inner \textbf{for loop} that compute standard and leave-one-out probabilities:
\begin{align}\label{batch_sample}
    &\hat{p}^{(\text{pow})}_{\alpha}(\bm{x}^{\prime}_{(t-1)B: tB-1}) =\frac{ p^{\alpha}(\bm{x}^{\prime}_{(t-1)B: tB-1}|\bm{q},\bm{x}_{0:(t-1)B-1})\hat{\zeta}_{t}(\bm{x}^{\prime}_{(t-1)B: tB-1})}{ \sum_{\bm{x}_{(t-1)B: tB-1}\in\mathcal{G}_t(\bm{x}_{0:(t-1)B-1})}p^{\alpha}(\bm{x}_{(t-1)B: tB-1}|\bm{q},\bm{x}_{0:(t-1)B-1})\hat{\zeta}_{t}(\bm{x}_{(t-1)B: tB-1})}.
\end{align}
\begin{align}\label{batch_leave_one_out}
    &\hat{p}^{(\text{pow})}_{\alpha, -s}(\bm{x}^{\prime}_{(t-1)B: tB-1}) =\frac{ p^{\alpha}(\bm{x}^{\prime}_{(t-1)B: tB-1}|\bm{q},\bm{x}_{0:(t-1)B-1})\hat{\zeta}^{(\text{LOO})}_{t, -s}(\bm{x}^{\prime}_{(t-1)B: tB-1})}{ \sum_{\bm{x}_{(t-1)B: tB-1}\in\mathcal{G}_t(\bm{x}_{0:(t-1)B-1})}p^{\alpha}(\bm{x}_{(t-1)B: tB-1}|\bm{q},\bm{x}_{0:(t-1)B-1})\hat{\zeta}^{(\text{LOO})}_{t, -s}(\bm{x}_{(t-1)B: tB-1})}.
\end{align}
for all $s\in \{1,\ldots, M_t\}$. Being equipped with these probabilities, for each candidate block $\bm{x}^{\prime}_{(t-1)B: tB-1} \in \mathcal{G}_t(\bm{x}_{0:(t-1)B-1})$, we compute jackknife likelihoods in line 15  using the following equation:
\begin{align}\label{batch_jackknife}
    &\hat{p}^{(\text{pow})}_{\alpha, \ \text{JK}} (\bm{x}^{\prime}_{(t-1)B: tB-1}|\bm{q}, \bm{x}_{0:(t-1)B-1}) = M_t\hat{p}^{(\text{pow})}_{\alpha}(\bm{x}^{\prime}_{(t-1)B: tB-1})  - \frac{M_t-1}{M_t}\sum_{s=1}^{M_t} {\hat{p}^{(\text{pow})}_{\alpha, -s}(\bm{x}^{\prime}_{(t-1)B: tB-1})}.
\end{align}
In lines 16-17, we sample the next block in the trajectory using the computed jackknife probability distribution and concatenate this block to the already constructed prefix. Finally, in lines 19-20 we construct the last part of the trajectory of length $q +1 = T - B\lfloor\frac{T}{B}\rfloor + 1$ by generating a collection of $L_{w}$ candidate completions $\bm{x}^{[r]}_{\lfloor\frac{T}{B}\rfloor B:T}\sim p(\cdot|\bm{q},\bm{x}_{0:B\lfloor\frac{T}{B}\rfloor - 1})\}$ (see line 19). In line 20, we select $K_w$, the most promising completions from group $\mathcal{L}_w$, with respect to their likelihood values. Then,  we sample one of these promising candidates, using a batch version of low temperature distribution (line 21):
\begin{align}\label{batch_low_temp}
    &\hat{p}^{(\text{low.temp})}_{\alpha} (\bm{x}_{\lfloor\frac{T}{B}\rfloor B:T}|\bm{q}, \bm{x}_{0:B\lfloor\frac{T}{B}\rfloor - 1}) = \frac{p^{\alpha}(\bm{x}_{\lfloor\frac{T}{B}\rfloor B:T}|\bm{q}, \bm{x}_{0:B\lfloor\frac{T}{B}\rfloor - 1})}{\sum_{\hat{\bm{x}}_{\lfloor\frac{T}{B}\rfloor B:T} \in \mathcal{G}_w(\bm{x}_{0:B\lfloor\frac{T}{B}\rfloor - 1})}p^{\alpha}(\hat{\bm{x}}_{\lfloor\frac{T}{B}\rfloor B:T}|\bm{q}, \bm{x}_{0:B\lfloor\frac{T}{B}\rfloor - 1}) }.
\end{align}
The selected completion is concatenated to the constructed trajectory (line 22) and forms the output of the algorithm. 

Please notice, in Algorithm \ref{Batch_Approx_alg} each generation of future rollouts (lines 4,7 and 19) can be performed in a parallel fashion, thus reducing further the overall processing time. The overall inference complexity of the above algorithm is given by $\sum_{t=1}^{\lfloor\frac{T}{B}\rfloor}(L_t + K_tM_t) + L_w$. We can also derive such bounds for the original algorithm in the main paper in terms of token generation complexity. This amounts to: $\mathcal{O}\left(\sum_{t=1}^{\lfloor\frac{T}{B}\rfloor}\left[L_tB + K_tM_tH_t\right] + L_wB\right)$.

\section{Further Experimental Results} \label{App:Diverse}
\subsection{Experimental Setups and Implementation Details}
We implement our method following Algorithm~\ref{App:Algo_B} and will release our code upon acceptance. All experiments are run on the same single GPU using \texttt{vLLM}~v0.6.3. Since our approach is purely a sampling procedure (no backpropagation). Therefore, compared to post-training approaches, the method is comparatively lightweight in memory and can be run on small-memory devices (depending primarily on the chosen model size).

\paragraph{Prompts.} Table~\ref{tab:app_prompts} lists the prompt templates used for each benchmark. For the Qwen models, we adopt the prompt templates used in the MCMC power-sampling repository to ensure fairness. For the DeepSeek models, we follow the official DeepSeek instruction format, keeping the task content identical while changing only the wrapper (instruction formatting). All the methods reported in the main evaluation section use these prompts.

\begin{table}[t]
\centering
\small
\setlength{\tabcolsep}{6pt}
\renewcommand{\arraystretch}{1.15}
\begin{tabular}{p{0.13\textwidth} p{0.40\textwidth} p{0.40\textwidth}}
\toprule
\textbf{Task} & \textbf{Qwen Prompt} & \textbf{DeepSeek Prompt} \\
\midrule

\textbf{MATH500} &
\begin{minipage}[t]{\linewidth}\vspace{0.2em}\raggedright\ttfamily
Can you solve the following math problem?\\
Please reason step by step, and put your final answer within \textbackslash\textbackslash boxed\{\}\\
\vspace{0.2em}\end{minipage}
&
\begin{minipage}[t]{\linewidth}\vspace{0.2em}\raggedright\ttfamily
Please reason step by step, and put your final answer within \textbackslash\textbackslash boxed\{\}\\
\vspace{0.2em}\end{minipage}
\\

\midrule
\textbf{HumanEval} &
\begin{minipage}[t]{\linewidth}\vspace{0.2em}\raggedright\ttfamily
Write a Python function to solve the following problem:
\vspace{0.2em}\end{minipage}
&
\begin{minipage}[t]{\linewidth}\vspace{0.2em}\raggedright\ttfamily
Complete the following Python function:
\vspace{0.2em}\end{minipage}
\\

\midrule
\textbf{GPQA} &
\begin{minipage}[t]{\linewidth}\vspace{0.2em}\raggedright\ttfamily
Answer the following multiple-choice question. The last line of your response should be of the following format: '\textbackslash\textbackslash boxed\{LETTER\}' (without quotes) where LETTER is one of ABCD (ex. '\textbackslash\textbackslash boxed\{A\}'). Think step by step before answering.\\
\{Question\} A) \{\} B) \{\} C) \{\} D) \{\}
\vspace{0.2em}\end{minipage}
&
\begin{minipage}[t]{\linewidth}\vspace{0.2em}\raggedright\ttfamily
Answer the following multiple-choice question. The last line of your response should be of the following format: '\textbackslash\textbackslash boxed\{LETTER\}' (without quotes) where LETTER is one of ABCD (ex. '\textbackslash\textbackslash boxed\{A\}'). Think step by step before answering.\\
\{Question\} A) \{\} B) \{\} C) \{\} D) \{\}
\vspace{0.2em}\end{minipage}
\\

\bottomrule
\end{tabular}
\caption{Prompt templates used for baselines and our method. Adapted from the MCMC power-sampling repository~\cite{ReasoningbySampling} and DeepSeek official instruction format.}
\label{tab:app_prompts}
\end{table}

\paragraph{Hyperparameter settings.} We summarise the key hyperparameters used for each sampling method in Table~\ref{tab:app_hparams}. Across all methods, we use a maximum generation length of $T_{\max}=3072$ tokens and terminate early upon emitting an EOS token. Unless otherwise stated, we set the random seed to \texttt{0} for all main results. For evaluation, we use the same scoring functions as the open-source MCMC power-sampling implementation~\cite{ReasoningbySampling} to ensure a fair comparison across decoding strategies.

For Low-temperature sampling, we apply token-level temperature scaling with temperature $1/\alpha$ and set $\alpha=4$.

For the verifier-free version of Best-of-$N$, we generate $N_{\text{BoN}}=32$ independent completions and select the candidate with the highest base-model sequence score, ranked by $\log p(\bm{x}\mid \bm{q})$ (using the same normalisation choice across benchmarks).

For the MCMC sampling method, we follow the open-source implementation of~\cite{ReasoningbySampling} with $\alpha=4$, $T_{\max}=3072$, block size $B=192$, $N_{\text{MCMC}}=10$ MCMC steps per stage, and proposal model $p_{\text{prop}}$ given by the base model sampled at temperature $1/\alpha=0.25$.

For the GRPO baseline, we use the GRPO checkpoints from~\cite{shao2025spurious,shao2024deepseekmath}. Training hyperparameters are described in the corresponding papers, and the released model weights are available at
\footnote{\url{https://huggingface.co/stellalisy/rethink_rlvr_reproduce-ground_truth-qwen2.5_math_7b-lr5e-7-kl0.00-step150}}
and
\footnote{\url{https://huggingface.co/deepseek-ai/deepseek-math-7b-rl}}.
These checkpoints correspond to post-training on the MATH training split for Qwen2.5-Math-7B and DeepSeek-Math-7B, respectively.

Our autoregressive approximation uses the same global settings as MCMC power sampling ($\alpha=4$, $T_{\max}=3072$, $B=192$) and applies token-level lookahead with Top-$K_t$ candidates and $M_t$ rollouts per candidate; we set $K_t=M_t=8$ and use the jackknife-corrected estimator $\hat{p}^{(\mathrm{pow})}_{\alpha,\mathrm{JK}}$ throughout. All likelihood computations are performed in log-space for numerical stability.

\begin{table}[t]
\centering
\small
\setlength{\tabcolsep}{8pt}
\renewcommand{\arraystretch}{1.15}
\begin{tabular}{lp{0.68\linewidth}}
\toprule
\textbf{Method} & \textbf{Hyperparameters} \\
\midrule
Low-temperature & temperature $=0.25=1/\alpha$ (with $\alpha=4$) \\
Best-of-$N$ & $N_{\text{BoN}}=32$; selection score $\log p(\bm{x}\mid \bm{q})$ \\
MCMC power sampling & $\alpha=4$, $B=192$, $N_{\text{MCMC}}=10$, $p_{\text{prop}}$: base temperature $1/\alpha$ \\
Power sampling (Ours) & $\alpha=4$, $B=192$, $K_t=8$, $M_t=8$ \\
\bottomrule
\end{tabular}
\caption{Main hyperparameter settings for each method.}
\label{tab:app_hparams}
\end{table}

\paragraph{Inference-time measurement and practical scaling.} All reported inference times are measured on a single GPU under the same runtime stack. We report wall-clock \emph{generation time per prompt} (excluding dataset loading and metric computation), averaged over the full evaluation set with the same batching configuration across methods, and fix the random seed to ensure comparable decoding behaviour. In our current configuration, our method is typically about $2.5$--$3.5\times$ slower than standard decoding due to additional rollouts, but remains up to $10\times$ faster than MCMC power sampling, which requires iterative refinement steps. The overhead of our method scales primarily with the rollout budget and candidate set size $(K_t,M_t)$ and the maximum generation length $T_{\max}$, and can be reduced by tuning these budgets with limited accuracy loss (see ablation study). Compared with RL post-training methods, our method requires no training compute or external rewards, while post-training can improve pass@1, it typically requires many generations per training example over large training sets, whereas sampling-based sharpening can be deployed immediately and is particularly attractive in small-to-moderate deployment regimes.

\subsection{More Pass@k Results and Analysis}\label{app:passk_more}

\begin{figure}[t]
  \centering
  \includegraphics[width=\columnwidth]{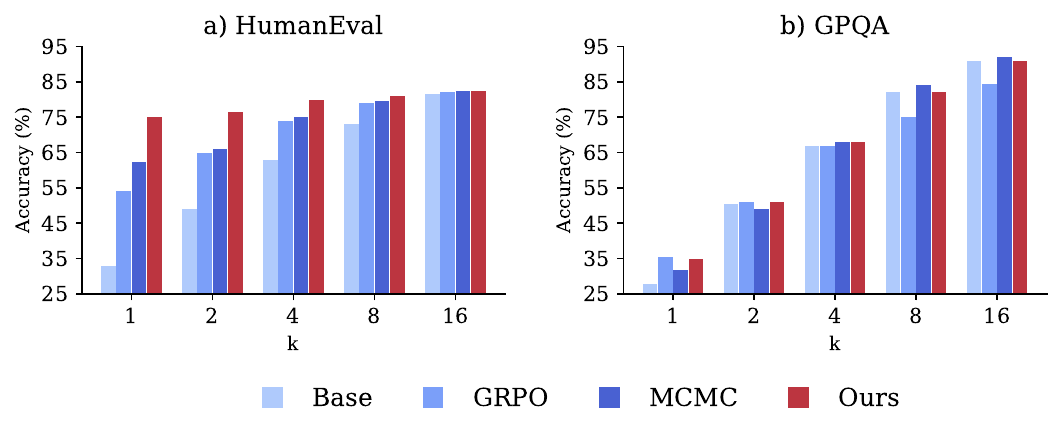}
  \caption{Pass@k performance $k\in\{1,2,4,8,16\}$ on out-of-domain tasks ((a) HumanEval and (b) GPQA). On HumanEval, GRPO exhibits similar scaling with $k$ to the base model (no obvious diversity collapse). On GPQA, GRPO improves pass@1 but scales more weakly with $k$, indicating reduced diversity even on an out-of-domain benchmark. In contrast, power sampling (MCMC and ours) improves pass@1 while preserving strong pass@k scaling.}
  \label{fig:passk_app}
\end{figure}

\paragraph{Pass@K on out-of-domain tasks.} Figure~\ref{fig:passk_app} reports pass@k on HumanEval and GPQA for Qwen2.5-Math-7B and its GRPO (MATH) post-trained variant. Although both tasks are out-of-domain with respect to GRPO training (which uses the MATH training split), we observe qualitatively different behaviours across domains. On HumanEval, GRPO retains comparable pass@k scaling to the base model, suggesting no strong diversity collapse in this setting. In contrast, on GPQA the GRPO curve shows similar behaviour as the in-domain MATH500 task (in Figure~\ref{fig:passk}): GRPO boosts pass@1 but provides limited additional gains for larger $k$, consistent with diversity reduction, suggesting that this kind of collapse also persists even in the out-of-domain task of GRPO. This behaviour is broadly consistent with recent findings that reward-based post-training can improve pass@1 while reducing diversity at larger $k$~\cite{yue2025does}, and we leave a systematic multi-model study of this effect on different post-training models to future work.

In comparison, power sampling methods improve pass@1 without inducing the same degradation in pass@k scaling. Both MCMC power sampling and our method achieve strong pass@k growth. This supports the main message of the paper: inference-time distribution sharpening can increase the probability of sampling high-quality trajectories while preserving diversity, without requiring training or external verifiers.

\paragraph{Qualitative examples.} We include two representative failure cases for GRPO under the pass@k setting, one in-domain example from MATH500 and one out-of-domain example from GPQA, that remain incorrect at pass@16 (as shown in Tables~\ref{tab:example_math} and~\ref{tab:example_gpqa}, we abbreviate long generations with ``\dots'' for readability). We observe that these failures are often driven less by a lack of capability than by reduced diversity: across 16 samples in practice, the GRPO model frequently repeats near-identical reasoning and the same final answer on many problems, so increasing $k$ provides limited additional coverage. This behaviour is not observed in non-RL models evaluated under identical prompts and sampling parameters, including Qwen2.5-Math-7B and DeepSeek-Math-7B, which are also trained on math-related data and typically produce a broader set of distinct candidates, thereby benefiting more from pass@k.

The GPQA example further illustrates a common out-of-domain failure mode: in the QA task, the GRPO-MATH model can produce largely correct reasoning yet still answer incorrectly by selecting the wrong option letter, or by terminating after intermediate calculations without outputting a final multiple-choice selection. This failure mode is less frequent in the corresponding base model, suggesting it may be a side effect of GRPO post-training on the MATH training split for this checkpoint. In contrast, our power sampling recovers correct trajectories by exploring alternative solution paths while preserving stochastic diversity.

\begin{table}[t]
\centering
\small
\setlength{\tabcolsep}{6pt}
\renewcommand{\arraystretch}{1.15}
\begin{tabular}{p{0.96\linewidth}}
\toprule
\textbf{Example 2 (Task 82, MATH500, Qwen2.5-Math-7B).} \\
\midrule
\textbf{Question.} Point $A$ lies somewhere within or on the square with opposite corners $(0,0)$ and $(2,2)$.
Point $B$ lies somewhere within or on the square with opposite corners $(4,2)$ and $(5,3)$.
What is the greatest possible value of the slope of the line containing points $A$ and $B$?
Express your answer as a common fraction. \\
\midrule

\textbf{GRPO (MATH).}\\
\begin{minipage}[t]{\linewidth}\vspace{0.2em}\raggedright\small
To determine the greatest possible value of the slope of the line containing points $A$ and $B$, we need to consider the coordinates of the points $A$ and $B$ within their respective squares. The slope of a line passing through points $(x_1, y_1)$ and $(x_2, y_2)$ is given by $\frac{y_2 - y_1}{x_2 - x_1}$. To maximize this slope, we need to maximize the difference in the $y$-coordinates and minimize the difference in the $x$-coordinates. \dots

The coordinates of point $A$ can range from $(0,0)$ to $(2,2)$, and the coordinates of point $B$ can range from $(4,2)$ to $(5,3)$. \dots
This means we should consider the points $(0,0)$ for $A$ and $(4,3)$ for $B$, as these points will give the maximum possible slope. \dots

Let's calculate the slope for these points:
\[
\text{slope} = \frac{3 - 0}{4 - 0} = \frac{3}{4}.
\]

Let's verify this using Python code to ensure accuracy. \dots \texttt{output} \texttt{3.0}\\

Therefore, the greatest possible value of the slope of the line containing points $A$ and $B$ is:
\[
\boxed{\frac{3}{4}}.
\]
\vspace{0.2em}\end{minipage}
\\

\midrule
\textbf{Ours.}\\
\begin{minipage}[t]{\linewidth}\vspace{0.2em}\raggedright\small
To determine the greatest possible value of the slope of the line containing points $A$ and $B$, we need to consider the coordinates of the points $A$ and $B$ within their respective squares. The slope $m$ of the line passing through points $A(x_1,y_1)$ and $B(x_2,y_2)$ is given by:
\[
m = \frac{y_2 - y_1}{x_2 - x_1}.
\]

The coordinates of point $A$ can range from $(0,0)$ to $(2,2)$, and the coordinates of point $B$ can range from $(4,2)$ to $(5,3)$. To maximize the slope, we need to maximize the numerator $y_2-y_1$ and minimize the denominator $x_2-x_1$. \dots

The maximum value of $y_2-y_1$ occurs when $y_2=3$ and $y_1=0$, giving $y_2-y_1=3$. The minimum value of $x_2-x_1$ occurs when $x_2=4$ and $x_1=2$, giving $x_2-x_1=2$. Therefore, the greatest possible value of the slope is:
\[
m=\frac{3}{2}.
\]

Let's confirm this with Python code \dots \texttt{output} \texttt{1.5}

Thus, the final answer is:
\[
\boxed{\frac{3}{2}}.
\]
\vspace{0.2em}\end{minipage}
\\

\midrule
\textbf{Ground truth.} $\boxed{\frac{3}{2}}$ \\
\bottomrule
\end{tabular}
\caption{Qualitative example illustrating pass@k behaviour. We show one sampled completion from GRPO (MATH) and our method on the same prompt.}
\label{tab:example_math}
\end{table}

\begin{table}[t]
\centering
\small
\setlength{\tabcolsep}{6pt}
\renewcommand{\arraystretch}{1.15}
\begin{tabular}{p{0.96\linewidth}}
\toprule
\textbf{Example 2 (Task 19, GPQA-Diamond, Qwen2.5-Math-7B).} \\
\midrule
\textbf{Question.} A common approximation made in many-body nuclear calculations of some interaction (for example an electroweak interaction) is to suppose that nucleons in a nucleus behave in the same way as free nucleons. Which of the following four assumptions jointly imply this ``impulse'' approximation?

1. The interaction current only interacts with individual nucleons.\\
2. The nucleus is transparent apart from the selected nucleon.\\
3. The quarks internal to the selected nucleon are non-relativistic.\\
4. The interaction proceeds as if the selected nucleon experiences no binding forces.\\

A) 2,3,4\quad
B) 1,2,4\quad
C) 1,2,3\quad
D) 1,3,4 \\
\midrule

\textbf{GRPO (MATH).}\\
\begin{minipage}[t]{\linewidth}\vspace{0.2em}\raggedright\small
To answer this question, we need to understand the conditions under which the impulse approximation holds. The impulse approximation is a simplification used in many-body nuclear calculations where the interaction current only interacts with individual nucleons, the nucleus is transparent apart from the selected nucleon, \dots and the interaction proceeds as if the selected nucleon experiences no binding forces.

Let's break down the conditions:
1. The interaction current only interacts with individual nucleons.\\
2. The nucleus is transparent apart from the selected nucleon.\\
3. The quarks internal to the selected nucleon are non-relativistic.\\
4. The interaction proceeds as if the selected nucleon experiences no binding forces.\\

The impulse approximation is valid when these conditions are met. Therefore, the correct answer is the set of conditions that includes all of these. \dots

Let's check the options: A) 2,3,4 \quad
B) 1,3,4 \quad
C) 1,2,4 \quad
D) 1,2,3

The correct answer is C) 1,2,4 because it includes all the conditions required for the impulse approximation. \dots

\[
\boxed{C}.
\]
\vspace{0.2em}\end{minipage}

\\

\midrule
\textbf{Ours.}\\
\begin{minipage}[t]{\linewidth}\vspace{0.2em}\raggedright\small
To determine which assumptions jointly imply the ``impulse'' approximation, let's analyze each assumption step by step:

1. The interaction current only interacts with individual nucleons. \dots This is a key aspect of the impulse approximation. \dots

2. The nucleus is transparent apart from the selected nucleon. \dots This is another important aspect of the impulse approximation. \dots

3. The quarks internal to the selected nucleon are non-relativistic. \dots not directly related to the impulse approximation. \dots

4. The interaction proceeds as if the selected nucleon experiences no binding forces. \dots another key aspect of the impulse approximation. \dots

Based on this analysis, the assumptions that jointly imply the impulse approximation are 1, 2, and 4. Therefore, the correct answer is:
\[
\boxed{B}.
\]
\vspace{0.2em}\end{minipage}
\\

\midrule
\textbf{Ground truth.} $\boxed{B}$ \\
\bottomrule
\end{tabular}
\caption{Qualitative example on GPQA illustrating pass@k behaviour. We show one sampled completion from GRPO (MATH) and our method on the same prompt.}
\label{tab:example_gpqa}
\end{table}

\subsection{Ablation Study on Different Models}\label{app:ablation_more}

\begin{figure}[t]
  \centering
  \begin{subfigure}[t]{\columnwidth}
    \centering
    \includegraphics[width=\linewidth]{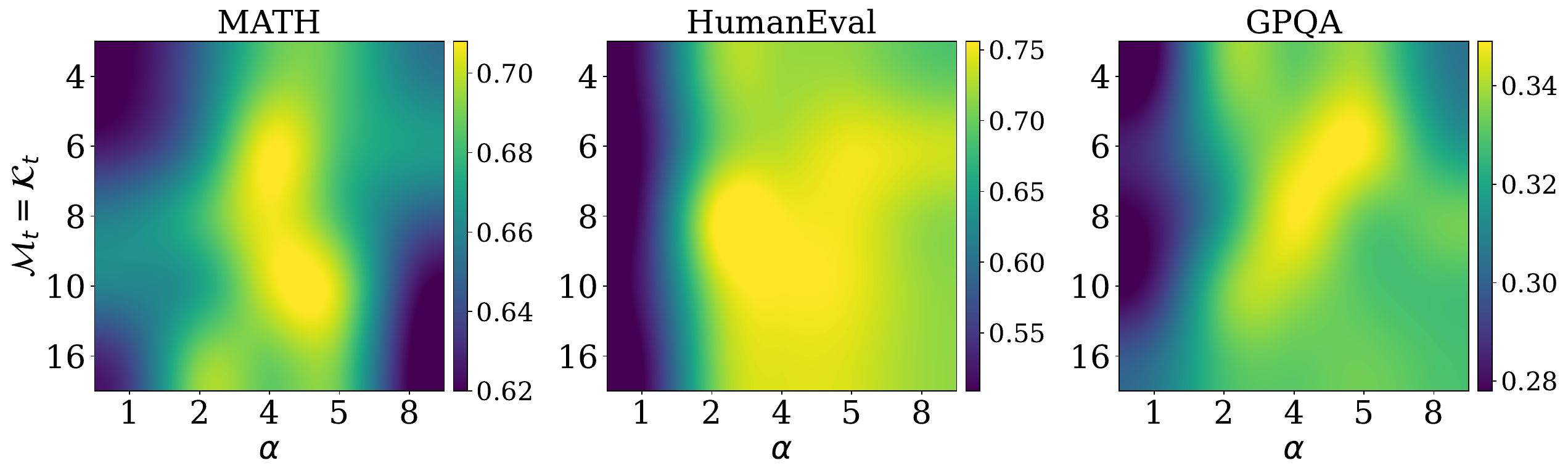}
    \caption{Qwen2.5-7B}
  \end{subfigure}
  \hfill
  \\
  \begin{subfigure}[t]{\columnwidth}
    \centering
    \includegraphics[width=\linewidth]{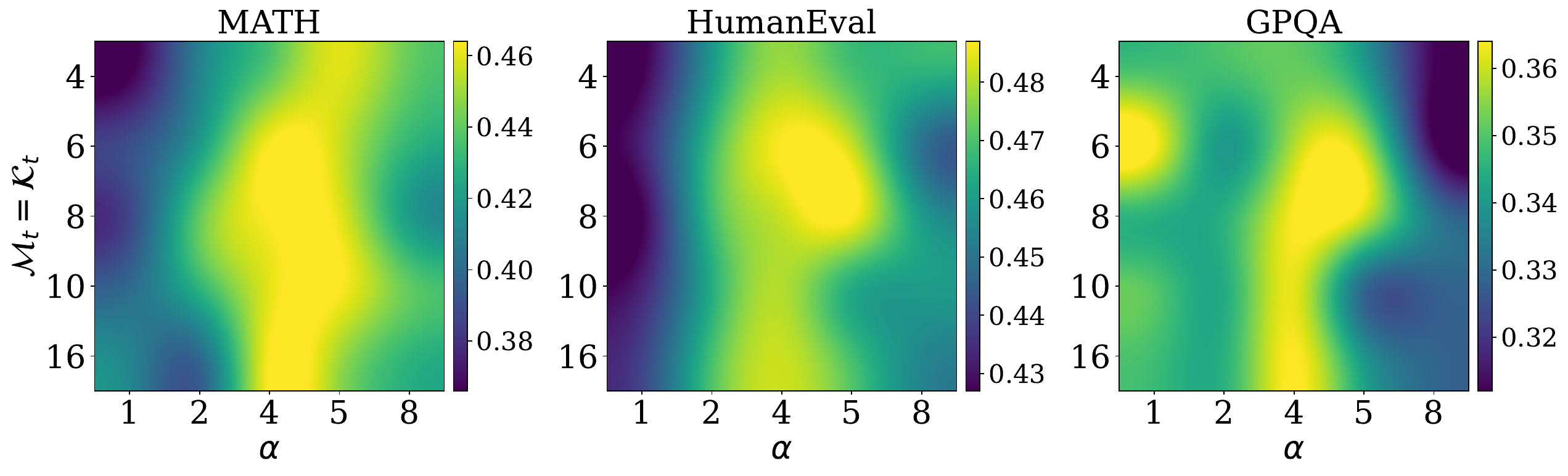}
    \caption{DeepSeek-Math-7B}
  \end{subfigure}

  \caption{Pass@1 performance for MATH, HumanEval, and GPQA with $\alpha\in\{1,2,4,5,8\}$ (x-axis) and $M_t=K_t\in\{4,6,8,10,16\}$ (y-axis) in Qwen2.5-7B (top) and DeepSeek-Math-7B (bottom).}
  \label{fig:ablation_app}
\end{figure}

We conduct additional ablations to assess sensitivity to hyperparameters on Qwen2.5-7B and DeepSeek-Math-7B in math, code, QA tasks separately on a validation subset. Figure~\ref{fig:ablation_app} shows consistent trends across models and tasks, which align with our findings in Qwen2.5-Math-7B (in Figure~\ref{fig:ablation}). First, performance is robust to the rollout and candidate budgets: under the same sharpening exponent $\alpha$, varying $M_t=K_t$ over the tested range only mildly affects pass@1, indicating that most gains can be achieved with relatively small sampling budgets. Second, for all the three base models, the strongest region is typically in a moderate range ($\alpha\approx 4$--$5$), while overly aggressive sharpening (e.g., $\alpha=8$) can degrade performance, and $\alpha=1$ performs poorly on most tasks. A notable exception appears on GPQA for DeepSeek-Math-7B, $\alpha=1$ can be competitive in parts of the budget range, which also aligns with the empirical observation that low-temperature decoding can be particularly strong on GPQA for this model. Overall, these ablations suggest that power sampling can ``unlock'' improved pass@1 across diverse base models without post-training using a small and stable hyperparameter range, and that the optimal $\alpha$ reflects a task-dependent balance between sharpening and preserving exploratory diversity.

\end{document}